\documentclass[runningheads]{llncs}

\usepackage{eccv}

\usepackage{eccvabbrv}

\usepackage{graphicx}
\usepackage{booktabs}
\usepackage{multirow}
\usepackage{wrapfig}
\usepackage{placeins}

\usepackage[accsupp]{axessibility}  

\usepackage[breaklinks,colorlinks,citecolor=eccvblue]{hyperref}

\usepackage{orcidlink}

\usepackage{algorithm}
\usepackage{algpseudocode}

\usepackage{multibib}
\newcites{app}{Supplementary References} 

\definecolor{myGreen}{rgb}{0.37, 0.83, 0.35}

\definecolor{myRed}{rgb}{1, 0, 0}

\definecolor{myBlue}{rgb}{0, 0.6, 1}

\newcommand{\paragraphcustom}[1]{\vspace{5pt}\noindent\textbf{#1}}

\begin{document}









%


\title{Persistent Robot World Models: Stabilizing Multi-Step Rollouts via Reinforcement Learning}








\titlerunning{PersistWorld: Persistent Robot World Models via RL}

\author{Jai Bardhan~\orcidlink{0000-0001-9348-3459} \and
Patrik Drozdik~\orcidlink{0009-0007-0689-2918} \and
Josef Sivic~\orcidlink{0000-0002-2554-5301} \and
Vladimir Petrik~\orcidlink{0000-0001-9450-4987}}

\authorrunning{J.~Bardhan et al.}

\institute{Czech Institute of Informatics, Robotics and Cybernetics, Czech Technical University in Prague \\
\email{\{first.last\}@cvut.cz}\\
\url{https://www.jaibardhan.com/persistworld}}

\maketitle

\begin{abstract}
Action-conditioned robot world models generate future video frames of the manipulated scene given a robot action sequence, offering a promising alternative for simulating tasks that are difficult to model with traditional physics engines. However, these models are optimized for short-term prediction and break down when deployed autoregressively: each predicted clip feeds back as context for the next, causing errors to compound and visual quality to rapidly degrade. We address this through the following contributions. 
First, we introduce a reinforcement learning (RL) post-training scheme that trains the world model on its own autoregressive rollouts rather than on ground-truth histories. We achieve this by adapting a recent contrastive RL objective for diffusion models to our setting
and show that its convergence guarantees carry over exactly.
%
Second, we design a training protocol that generates and compares multiple candidate variable-length futures from the same rollout state, reinforcing higher-fidelity predictions over lower-fidelity ones.
Third, we develop efficient, multi-view visual fidelity rewards that combine complementary perceptual metrics across camera views and are aggregated at the clip level for dense, low-variance training signal.
Fourth, we show that our approach establishes a new state-of-the-art for rollout fidelity on the DROID dataset, outperforming the strongest baseline on all metrics (e.g., LPIPS reduced by 14\% on external cameras, SSIM improved by 9.1\% on the wrist camera), winning 98\% of paired comparisons, and achieving an 80\% preference rate in a blind human study.


  \keywords{World Models \and Reinforcement Learning \and Robotics}
\end{abstract}
\section{Introduction}
\label{sec:intro}


\begin{figure}[tb]
  \centering
  \includegraphics[width=\textwidth]{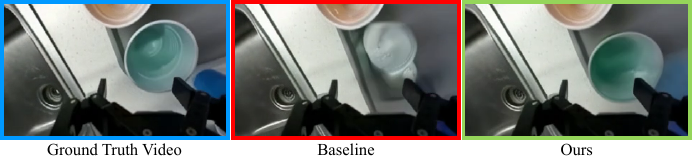}
    \caption{
  \textbf{Autoregressive video rollout quality.} We use an action-conditioned robot world model to generate multi-view image predictions from a single observed state. The ground-truth (\textcolor{myBlue}{blue}) is compared against the baseline (\textcolor{myRed}{red}) and our post-trained model {\bf PersistWorld} (\textcolor{myGreen}{green}). While the baseline accumulates error and destroys the object (cyan bowl) within seconds, our method maintains structural integrity and spatial consistency, establishing a new state-of-the-art in rollout fidelity.
  }
  \label{fig:teaser}
  \vspace*{-6mm}
\end{figure}


Action-conditioned video diffusion world models (WM) represent a transformative frontier for robot learning, offering the potential to simulate complex, human-centric tasks that are notoriously difficult to model with traditional physics-based engines. Recent work~\cite{ctrlworld} has demonstrated that finetuning pre-trained video diffusion backbones with action conditioning on large-scale robotics datasets~\cite{khazatsky2024droid} can yield world models that generate impressively faithful clips that are consistent with the robot's actions. By synthesizing high-fidelity visual rollouts conditioned on robot actions, these models can serve as scalable virtual environments for benchmarking and improving Vision-Language-Action (VLA) policies. 

However, realizing this potential requires generating \emph{long-horizon} autoregressive rollouts —  multiple seconds of coherent video — where each predicted clip feeds back as context for the next. This is precisely where current models break down. This phenomenon, known as \emph{exposure bias}~\cite{ranzato2015sequence}, arises from a train/test distribution mismatch, i.e., the model is trained to predict from \emph{ground-truth} history frames, but at deployment it must condition on its own previously generated outputs, which carry growing imperfections.
The consequences are rapid and severe. Within seconds of autoregressive generation, manipulated objects lose their structural identity — a bowl dissolves into an amorphous blob (see Fig.~\ref{fig:teaser}) — robot end-effectors drift from their commanded trajectories, and entire scene configurations decohere (see Fig.~\ref{fig:qualitative}). 

%
Reinforcement learning (RL) offers a natural framework for addressing this problem: by computing a training signal directly on the model's own autoregressive rollouts, it incentivizes consistent long-horizon generation rather than single-step accuracy. However, applying RL to diffusion models is challenging — standard policy-gradient methods require likelihoods that diffusion models do not provide, and backpropagating through the full denoising process is prohibitively expensive. Recent work~\cite{zheng2025diffusionnft} offers an elegant workaround by generating multiple candidate outputs, scoring them with a reward, and using the comparison to update the model — avoiding backpropagation through denoising entirely. However, this approach was developed for single-image generation and does not directly apply to our setting for two reasons. First, our robot world model uses a different type of denoising network, which means the theoretical guarantees need to be re-derived. Second, in image generation, the model can simply draw multiple independent samples from the same text prompt and compare them. In autoregressive video, there is no fixed prompt — each generation step builds on the previous output, creating an evolving shared state. This means we need a new mechanism for producing comparable candidates that can be meaningfully ranked against each other.
We address these challenges via the following contributions:

\begin{enumerate}
    \item \textbf{RL post-training for robot world models.} We introduce a post-training scheme that optimizes the world model directly on its own autoregressive rollouts rather than on ground-truth histories. We adapt a recent contrastive RL method for diffusion models~\cite{zheng2025diffusionnft} to our setting, where the denoising network directly predicts clean frames rather than intermediate noise, and show that convergence guarantees carry over exactly (Sec.~\ref{sec:rl}). 

    \item \textbf{A training protocol for autoregressive robot world models.} 
  Autoregressive robot world model has no fixed prompt from which multiple candidates can be drawn and compared. We observe that the model's accumulated history at any rollout step serves as a natural shared context from which multiple candidate continuations can be independently generated and ranked. By randomly sampling how deep into the rollout we branch these candidates, we expose training to both mild early-stage and severe late-stage error regimes (Sec.~\ref{sec:rollout}).
  

    \item \textbf{Multi-view visual rewards and task-relevant evaluation.} 
    We design clip-level rewards that combine complementary perceptual metrics across all three camera views, normalized so that the training signal reflects relative quality within each group of candidates. We further introduce object-centric and robot-centric masked evaluations that confirm our improvements come from better modeling of task-relevant dynamics rather than background preservation (Sec.~\ref{sec:rewards} and Sec.~\ref{sec:exp}).

    \item \textbf{State-of-the-art rollout quality.} Our post-trained model sets a new state-of-the-art on the DROID dataset~\cite{khazatsky2024droid} across all visual quality metrics, with the largest improvements on the wrist camera---the view most critical for capturing fine-grained object manipulation. In paired comparisons, our model outperforms the baseline on approximately 98\% of validation samples. A blind human preference study confirms these gains, with raters favoring our rollouts 80\% of the time (Sec.~\ref{sec:exp}).

     
\end{enumerate}

\section{Related work}
\label{sec:related_works}

\paragraphcustom{Video Diffusion Models.}
Diffusion models have emerged as a dominant paradigm for high-fidelity visual generation and have been successfully extended from images to videos by modeling space-time volumes with denoising objectives~\cite{ho2022video, ho2022imagen, singer2022make}. Latent video diffusion approaches adapt image diffusion backbones with temporal modules, enabling strong image-to-video and text-to-video generation while encoding rich visual priors about object motion, lighting, and physical plausibility~\cite{blattmann2023stable}. While these models produce impressive open-loop clips, long-horizon generation via autoregressive stitching of short segments causes errors in early frames to compound as the model conditions on its own imperfect outputs, leading to temporal drift and degradation. Our setting inherits this challenge in an action-conditioned, multi-view robotics regime.

\paragraphcustom{Robotic World Models and Video-Based Planning.}
World models have a long history in model-based RL as learned dynamics models enabling planning through imagined rollouts~\cite{ha2018world, hafner2019dream, hafner2019learning, hafner2023mastering}. Several works frame robot planning and evaluation directly as video generation~\cite{universalpolicies, Ko2023Learning, black2023zero, hu2025videopredictionpolicygeneralist}, and IRASim~\cite{zhu2025irasim} demonstrates that a trajectory-conditioned diffusion world model can serve as a policy evaluation. Large pre-trained video diffusion backbones have been adapted into controllable robot world models via action conditioning: Ctrl-World~\cite{ctrlworld} and WPE~\cite{quevedo2025worldgym}, both trained on DROID~\cite{khazatsky2024droid}, generate multi-view manipulation trajectories and can rank downstream policy performance; AVID~\cite{rigter2024avid} adapts pretrained video diffusion via a learned mask adapter without parameter access; and UWM~\cite{zhu2025unified} jointly models video and action diffusion. None of these directly address the training-inference mismatch under self-conditioned rollout.

\paragraphcustom{Post-Training Diffusion Models.}
Aligning diffusion models to downstream objectives has been studied through RL and preference optimization~\cite{black2023training, wallace2024diffusion, liu2025flow, xue2025dancegrpo, prabhudesaivader}. DiffusionNFT~\cite{zheng2025diffusionnft} proposes a negative-aware fine-tuning objective on the forward diffusion process, enabling efficient online RL updates without backpropagating through the denoising trajectory. DPPO~\cite{ren2024diffusion} applies RL fine-tuning to diffusion-based robot action policies; our work instead post-trains the world model itself. RLVR-World~\cite{wu2025rlvr} applies RL with verifiable rewards to improve world model transition quality, further evidencing that RL objectives outperform MLE for rollout fidelity. However, they work with token based models, whereas we work on improving video diffusion models.
Contemporary to our work,~\cite{wang2026worldcompass} applies contrastive RL post-training~\cite{zheng2025diffusionnft} to a camera-pose-conditioned world model, targeting improvements in camera pose following and visual fidelity—including a prefix rollout strategy in which the model's own outputs serve as context for subsequent clip generation. Our work shares this core motivation and training paradigm, but differs in several respects. We adopt a randomized prefix horizon during post-training rather than a fixed schedule, which we find better captures the distribution of compounding errors at test time. Our setting targets dynamic manipulation scenes in a multi-view robotic world model and we introduce a novel adaptation of the contrastive RL objective to the $\mathbf{x}_0$-prediction parameterization used by some robot WMs. We further design visual rewards that are efficient and scalable for the multi-view robot manipulation set-up. We find that compared to the HPSv3~\cite{ma2025hpsv3} reward that is used by~\cite{wang2026worldcompass}, our rewards result in much better performance. Finally, we additionally validate our approach through quantitative evaluation on robot-centric metrics and a human preference study.

\paragraphcustom{Exposure Bias and the Rollout Gap.}
The mismatch between teacher-forced training and self-conditioned inference is a longstanding problem in sequence generation~\cite{bengio2015scheduled, ranzato2015sequence}, and has been studied specifically for diffusion models~\cite{ning2023input}. In video generation the effect is amplified: small per-frame errors accumulate over long rollouts, degrading coherence and limiting world model utility for evaluation and simulation. We address this \emph{rollout gap} with RL post-training that directly exposes the model to its own generated histories during training.

\paragraphcustom{Large-Scale Robot Datasets.}
Large-scale robot datasets enable both training data-hungry video world models and the held-out ground-truth trajectories our reward computation relies on. DROID~\cite{khazatsky2024droid} provides diverse multi-camera manipulation demonstrations, and Open X-Embodiment~\cite{o2024open} aggregates demonstrations across many embodiments and institutions—making dataset-driven reward evaluation feasible without human preference labels.
\section{Improving Robot World Models with Reinforcement Learning}
\label{sec:method}

\begin{figure}[t!]
  \centering
  \includegraphics[width=\textwidth]{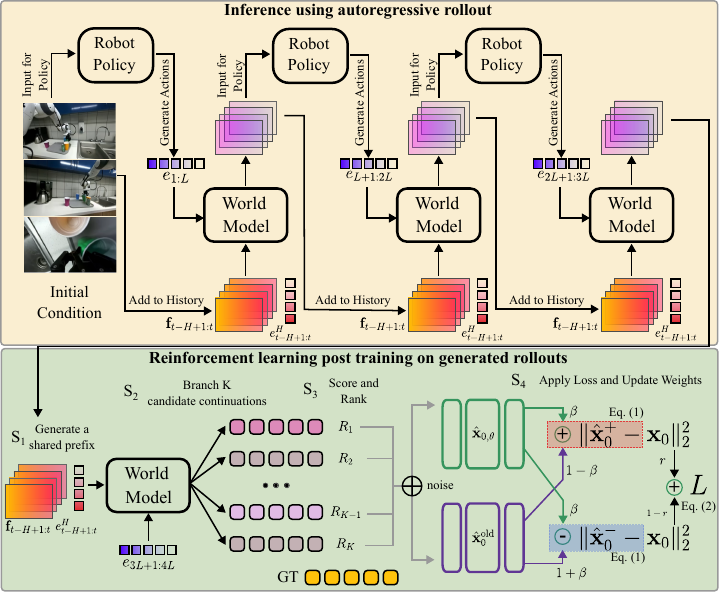}
   \caption{\small \textbf{Overview of our method:} 
   \textit{(Top)} \textbf{Autoregressive inference:} A robot policy generates actions fed to the world model, which produces multi-view frames that are appended to the history buffer and condition the next generation step. \textit{(Bottom)} \textbf{RL post-training:} \textbf{(S1)}~A shared variable-length prefix is rolled out autoregressively from a ground-truth initial condition. \textbf{(S2)}~$K$ independent candidate continuations are branched from the frozen prefix state. \textbf{(S3)}~Candidates are scored against ground-truth using multi-view perceptual rewards. \textbf{(S4)}~Reward weights $r$ scale implicit positive/negative $\mathbf{x}_0$ predictions used in contrastive model updates via loss $L$.}
  \label{fig:overview}
  \vspace*{-4mm}
\end{figure}

Action-conditioned video diffusion world models are trained to predict the next chunk of video frames given a clean, ground-truth history — a setup that works well in isolation, but breaks down the moment the model is deployed autoregressively over longer horizons. The central challenge we address is the following: because the model has never seen imperfect history at training time, any error introduced at one step propagates forward and compounds at the next, causing rollout quality to degrade rapidly. Resolving this requires more than better data or longer training; it requires a different approach to training — one that explicitly optimizes the model under the same auto-regressive, self-conditioned regime in which it operates at test time.

This section develops our approach in three stages. We begin by characterizing the train/test distribution mismatch and why standard training cannot resolve it (Sec.~\ref{sec:problem}). We then formulate post-training as an online reinforcement learning (RL) problem and derive a tractable objective by adapting a contrastive forward-process training designed for velocity-prediction flow-matching models~\cite{zheng2025diffusionnft} — to the EDM~\cite{karras2022elucidatingdesignspacediffusionbased} $x_0$-prediction parameterization of Ctrl-World's SVD backbone; we show that the branch construction and policy-improvement guarantees carry over exactly, with no $\sigma$-dependent correction terms (Sec.~\ref{sec:rl}). Finally, we address two concrete design challenges: how to construct a group-relative training signal from autoregressive video rollouts (Sec.~\ref{sec:rollout}), and how to define rewards that faithfully assess multi-view, multi-step visual quality (Sec.~\ref{sec:rewards}).

\subsection{The Closed-Loop Gap in Autoregressive World Models}
\label{sec:problem}

\paragraphcustom{Model overview.} 
An autoregressive robot world model, such as Ctrl-World~\cite{ctrlworld}, operates in a loop: at each step, it receives a history buffer of recent frame latents encoding the visual context so far, together with past and future robot actions, and generates the next chunk of future frames. The generated frames are then encoded and appended to the history buffer, which conditions the next generation step.
%
In detail, at each autoregressive step, the model receives three inputs: (i) a history buffer $\mathbf{f}_{t-H+1:t}$ of $H{=}6$ recent frame latents that encode the visual context of what has transpired so far; (ii) the corresponding robot end-effector (EEF) poses $\mathbf{e}^H_{t-H+1:t}$; and (iii) a sequence of future EEF pose targets $\mathbf{e}_{t+1:t+L} \in \mathbb{R}^{L \times 7}$. From these, the model simultaneously generates $L{=}5$ future frames across three camera views — two external views and one wrist-mounted view.

\paragraphcustom{The closed-loop gap.} Training follows a standard diffusion objective under teacher forcing: given a clip of length $H + L$ from the dataset, the model learns to denoise the final $L$ frames conditioned on the preceding $H$ \emph{ground-truth} frames as history. This produces a reliable training signal, but installs a structural mismatch with deployment. At test time, no ground-truth history is available; each generated clip is encoded and appended to the rolling history buffer, which then conditions the next denoising pass. The model must now condition on its own previous outputs — inputs it was never trained to handle.
The consequence is an error compounding loop. A minor spatial or temporal inaccuracy in clip $t$ corrupts the latents stored in the history buffer. These corrupted latents condition clip $t+1$ with increased error, which in turn corrupts the history for clip $t+2$. Within seconds, generated scenes decohere: object configurations blur, robot state diverges from the commanded trajectory, and scene identity dissolves. Rollouts beyond a few seconds become unreliable as surrogates for real-world execution — precisely the use-case that makes world models valuable.
This is not a data sufficiency problem. No amount of teacher-forced training gives the model incentive to be robust to its own imperfect history, because imperfect history is absent from the training distribution by construction. What is needed is a training signal computed directly from the model's own autoregressive outputs — one that rewards coherent closed-loop generation and penalizes compounding drift.

\subsection{Online RL Post-Training via Reward-Contrasted Denoising}
\label{sec:rl}

Online reinforcement learning offers a principled solution: generate rollouts autoregressively, evaluate them against held-out ground truth, and update the model toward higher-fidelity outputs — with the training distribution defined by the model's own production rather than teacher-forced ground truth. Because the reward is computed on self-generated frames, the training signal inherently reflects the closed-loop statistics of deployment. The challenge is making this compatible with diffusion models.


\paragraphcustom{Reward-conditioned forward-process training.}
 We address this challenge by recasting policy improvement as contrastive denoising~\cite{zheng2025diffusionnft}:
rather than estimating reverse-process likelihoods, we generate a group of candidate outputs, score them with a reward, and encode the relative quality signal directly into the denoising loss — reinforcing what the model produces for high-reward candidates and penalizing what it produces for low-reward ones. 
The contrastive denoising approach~\cite{zheng2025diffusionnft} was originally derived for velocity-prediction flow-matching models such as SD3~\cite{sd3}.  However, some world models ---
including Ctrl-World~\cite{ctrlworld}, which we build upon --- employ an $x_0$-prediction
parameterization, where the network directly estimates the clean data $x_0$ rather than a velocity
field. We show that the contrastive denoising framework transfers naturally to this setting: because the mapping
from network output to clean-data prediction is affine, the contrastive objective construction
and its policy-improvement guarantees carry over exactly, with no additional correction terms.

The full derivation is in the \textbf{Appendix~\ref{app:diffusionnft-x0}}. 
The resulting $x_0$-adapted objective takes the following form. Let $\hat{\mathbf{x}}_{0,\theta}$ denote the model's clean data ($x_0$) prediction and $\hat{\mathbf{x}}_0^{\mathrm{old}}$ a frozen exponential moving average (EMA) copy serving as the reference policy. For a candidate with normalized reward weight $r \in [0,1]$ and mixing coefficient $\beta$, we construct implicit positive and negative clean data ($x_0$) predictions:
\begin{equation}
\label{eq:x_predictions}
  \hat{\mathbf{x}}_{0}^+ = (1-\beta)\,\hat{\mathbf{x}}_0^{\mathrm{old}} + \beta\,\hat{\mathbf{x}}_{0,\theta}, \qquad
  \hat{\mathbf{x}}_{0}^- = (1+\beta)\,\hat{\mathbf{x}}_0^{\mathrm{old}} - \beta\,\hat{\mathbf{x}}_{0,\theta},
\end{equation}
and minimize the reward-weighted denoising loss:
\begin{equation}
  \mathcal{L}(\theta) = \mathbb{E}\Bigl[r\,\|\hat{\mathbf{x}}_{0}^+ - \mathbf{x}_0\|_2^2 + (1-r)\,\|\hat{\mathbf{x}}_{0}^- - \mathbf{x}_0\|_2^2\Bigr].
\label{eq:diffusionnft-x0-loss}
\end{equation}
Intuitively, the difference $\hat{\mathbf{x}}_{0,\theta} - \hat{\mathbf{x}}_0^{\mathrm{old}}$
defines the direction in which the current model has drifted from the frozen reference. The
positive branch $\hat{\mathbf{x}}_0^+$ \emph{extrapolates} along this direction: it takes
the reference prediction and moves it toward the current model by a factor $\beta$, amplifying
whatever changes the model has learned. Conversely, the negative branch
$\hat{\mathbf{x}}_0^-$ \emph{reverses} this direction, constructing a counterfactual
prediction that moves away from the current model's output. The loss in
Eq.~\ref{eq:diffusionnft-x0-loss} then uses the reward weight $r$ to interpolate between fitting
$\hat{\mathbf{x}}_0^+$ (reinforcing the model's current direction for high-reward samples)
and fitting $\hat{\mathbf{x}}_0^-$ (repelling the model from its own predictions for
low-reward samples). The mixing coefficient $\beta$ controls the strength of this
amplification: larger $\beta$ produces a stronger reinforcement signal but risks
destabilizing training, while $\beta \to 0$ recovers standard supervised denoising against
the reference. Note that this formulation requires only the clean generated samples and the
reference predictions --- it avoids backpropagating through the denoising chain entirely,
making it compatible with any black-box sampler.
\subsection{Adapting Group-Relative Training to Autoregressive Video}
\label{sec:rollout}

The RL formulation above addresses how to update a diffusion model given reward-scored samples. Applying it to an autoregressive world model raises a second, distinct challenge: the formulation assumes a natural grouping structure — a shared conditioning input from which multiple independent candidate outputs are drawn. In image generation, this structure is straightforward: sample $K$ independent images from the same text prompt and compare them by reward. In autoregressive video generation, no such fixed prompt exists. Each generation step produces a clip that modifies the shared history buffer, which then conditions the next step; candidate clips are not independent draws from a common condition, but sequential extensions of an evolving shared state.

The key observation is that the history buffer state immediately before any generation step plays exactly the role of the prompt in the group-relative setting: it is the accumulated context from which distinct candidate continuations can be independently branched. Freezing this buffer state and sampling $K$ independent candidate next clips from it yields a group that shares a common context, enabling meaningful reward-based comparison and contrastive training. 
This recovers the shared-context / independent-response structure required by group-relative objectives.
%
We realize this structure through the following rollout protocol at each training step:
\begin{enumerate}
  \item[$\text{S}_1$:] \textbf{Generate a shared prefix.} Starting from a single ground-truth observation — with the history buffer backfilled by replicating its encoded latent — we autoregressively generate $P$ consecutive clips, feeding the model's own outputs back as history at each step. This mirrors closed-loop deployment and produces a history buffer state that has been corrupted by the model's own accumulated errors. The prefix length is sampled as $P \sim \mathrm{Unif}\{0, 1, \ldots, 9\}$, exposing training to the full spectrum of rollout positions — from early steps where the buffer is nearly clean to late steps where compounding drift is severe.

  \item[$\text{S}_2$:] \textbf{Branch $K$ candidate continuations.} From the frozen prefix history buffer, we independently sample $K{=}16$ candidate next segments. Each candidate is a short autoregressive sequence of $F$ chunks: the model generates $L$ frames across all three views simultaneously, encodes and appends them to a private copy of the history buffer, and repeats for $F$ steps. Each candidate follows its own distinct stochastic trajectory from the shared context.

  \item[$\text{S}_3$:] \textbf{Score and rank.} A visual reward $R_t^{(k)}$ is computed for each candidate by comparing its generated frames against held-out ground-truth frames across all three camera views (Sec.~\ref{sec:rewards}). Rewards are group-normalized over the $K$ candidates to form relative advantages, removing the influence of absolute reward scale at different rollout positions.

  \item[$\text{S}_4$:] \textbf{Update the model.} The group-normalized reward weights are used to scale the positive and negative denoising losses (Eq.~\ref{eq:diffusionnft-x0-loss}), and the model is updated via gradient descent. Only LoRA adapters and the action encoder receive gradient updates; the backbone is frozen.
\end{enumerate}

The variable prefix length serves two purposes. It ensures the model is optimized to maintain quality across the full rollout depth, not only at short horizons. It also exposes the update to diverse history buffer corruption profiles — from lightly drifted early-step buffers to heavily degraded late-step ones — preventing overfitting to any single error regime.

\subsection{Visual Rewards for Multi-View Video Clips}
\label{sec:rewards}

Defining an effective reward for autoregressive robot video generation requires carefully considering what to measure and how to aggregate it. The signal must be dense enough to be informative at each training step, directly tied to perceptual quality rather than proxy statistics, and consistent across the three camera views, which provide complementary perspectives on the manipulated scene. A reward aggregated across an entire long-horizon rollout would be high-variance and would make credit assignment to specific generations difficult; we therefore score at the granularity of individual clips.

For each candidate clip at time $t$, we compare generated frames $\hat{\mathbf{x}}^{(v)}_{t+1:t+L}$ against the held-out ground-truth frames $\mathbf{x}^{(v)}_{t+1:t+L}$ for each view $v \in \mathcal{V} = \{\mathrm{wrist},\, \mathrm{ext}_1,\, \mathrm{ext}_2\}$. Per-frame metrics are first averaged temporally over the $L$ frames of the clip:
\begin{equation}
  \overline{m}^{(v)}_t = \frac{1}{L}\sum_{\ell=1}^{L} m\!\left(\hat{\mathbf{x}}^{(v)}_{t+\ell},\, \mathbf{x}^{(v)}_{t+\ell}\right), \quad m \in \{\mathrm{LPIPS},\, \mathrm{SSIM},\, \mathrm{PSNR}\},
\end{equation}
and then averaged equally across the three views:
\begin{equation}
  \overline{m}_t = \frac{1}{3}\!\left(\overline{m}^{(\mathrm{wrist})}_t + \overline{m}^{(\mathrm{ext}_1)}_t + \overline{m}^{(\mathrm{ext}_2)}_t\right).
\end{equation}
We use three complementary metrics to capture distinct failure modes. LPIPS~\cite{zhang2018unreasonable} measures perceptual similarity in deep feature space, penalizing structural distortions even when pixel values are numerically close. SSIM~\cite{ssim} captures luminance, contrast, and local structural fidelity over spatial patches. PSNR~\cite{psnr} provides a global signal-to-noise measure that is sensitive to large pixel deviations, acting as a coarse indicator of catastrophic scene drift. Using all three produces a reward robust to the blind spots of any individual metric. LPIPS captures perceptual distortions invisible to pixel-level metrics, SSIM is sensitive to local structural changes, and PSNR flags large-scale pixel drift; combining them guards against failure modes that any single metric would miss.
The per-view, per-metric averages are combined into a single scalar reward:
\begin{equation}
  R_t = -w_{\mathrm{LPIPS}}\,\overline{\mathrm{LPIPS}}_t + w_{\mathrm{SSIM}}\,\overline{\mathrm{SSIM}}_t + w_{\mathrm{PSNR}}\,\overline{\mathrm{PSNR}}_t,
\end{equation}
where LPIPS is negated (it is lower-better), and the weights $w$ are set to bring the three components to a comparable numerical scale (values in Sec.~\ref{sec:exp}).

\paragraph{Group normalization.}
Because absolute reward values vary significantly across rollout positions — early clips score much higher than late ones — we normalize rewards within each group to focus the training signal on relative quality differences.
The per-candidate rewards $R_t^{(k)}$ are group-normalized over the $K$ candidates via z-score normalization:
\begin{equation}
  A^{(k)} = \frac{R^{(k)} - \mu_R}{\sigma_R + \epsilon},
  \quad \mu_R = \frac{1}{K}\sum_{k=1}^{K}R^{(k)}, \quad \sigma_R = \mathrm{std}_{k}(R^{(k)}).
\end{equation}
This converts absolute reward values into relative rankings within the group, removing the confound of reward scale variation across rollout positions. The z-scored advantages are clipped to $[-1, 1]$ and linearly rescaled to the $[0,1]$ range required by Eq.~\ref{eq:diffusionnft-x0-loss}:
\begin{equation}
\label{eq:scaled_r}
  r^{(k)} = \frac{\mathrm{clip}(A^{(k)},\,-1,\,1) + 1}{2}.
\end{equation}
This normalization encourages the model to discriminate between better and worse continuations from the same context, and keeps the gradient magnitude bounded regardless of the absolute level of visual quality at any given rollout position.
\section{Experiments}
\label{sec:exp}

\paragraphcustom{Implementation details.}
We use the pre-trained Ctrl-World model~\cite{ctrlworld} as our base. For the proposed post-training, we apply LoRA~\cite{hu2021loralowrankadaptationlarge} adapters to the UNet backbone (rank $r{=}64$, $\alpha{=}64$) and additionally finetune the action encoder; all other parameters (other UNet layers, VAE, etc.) are frozen. We train for 6{,}000 steps with learning rate $1{\times}10^{-4}$ using the Muon optimizer~\cite{jordan2024muon} with the batch size $64$ and group size $K{=}16$. Additionally, we subsample the group elements by taking the 10 most informative samples (top-5 and bottom-5 ordered by the reward) per update step. Reward weights are set to: $w_{\mathrm{LPIPS}}{=}w_{\mathrm{SSIM}}{=}1$ and $w_{\mathrm{PSNR}}{=}\tfrac{1}{32}$, with the $\tfrac{1}{32}$ factor bringing PSNR into a comparable numerical range with SSIM~$\in[0,1]$. From our ablations this leads to the best average performance across all metrics. The model is trained on $8$~NVIDIA H200 GPUs for $2$ days. 
Primary training cost is the autoregressive rollout generation, common in prior works~\cite{self-forcing, rolling-forcing}.
Overall, we found that our choices are a conservative default 
and one can tune even more to obtain better performance; for example, see Table~\ref{tab:abl-group} in Appendix. For more details on ablations, see Appendix~\ref{app:ablations}.

\paragraphcustom{Dataset.}
We evaluate on the DROID dataset~\cite{khazatsky2024droid}, a large-scale robot manipulation dataset collected on a Franka Emika Panda robot across a diverse set of tabletop environments. DROID comprises teleoperated demonstrations across a wide variety of everyday manipulation tasks and uses a standardized three-camera setup (two external cameras and one wrist-mounted camera). We use Ctrl-World's held-out validation split for all quantitative evaluations. 

\paragraphcustom{Autoregressive rollout quality evaluation.}
We evaluate autoregressive rollout quality on pre-recorded trajectories from the validation split. Starting from a single observed state (frames from all cameras + robot EEF pose), we generate 14 consecutive clips ($14 \times L{=}70$ frames, covering approximately 11\,s at 5\,Hz) using the autoregressive procedure from Sec.~\ref{sec:rollout}. We then compare the generated frames to the corresponding ground-truth frames from the dataset using SSIM~\cite{ssim}, PSNR~\cite{psnr}, and LPIPS~\cite{zhang2018unreasonable}. We report metrics separately for external cameras and the wrist camera, as these views capture qualitatively different aspects of the scene.

Our model establishes a new state-of-the-art for autoregressive rollout quality on the DROID dataset, consistently outperforming all baselines—WPE~\cite{quevedo2025worldgym}, IRASim~\cite{zhu2025irasim}, and Ctrl-World~\cite{ctrlworld}—across every metric (Table~\ref{tab:visual-results}).
Compared to Ctrl-World baseline, our approach achieves significant gains on external cameras, improving PSNR by 1.40~dB and reducing LPIPS by 14.0\%. These margins widen considerably against WPE and IRASim, where we see PSNR improvements of 4.09~dB and 3.06~dB and LPIPS reductions of 46.6\% and 40.2\%, respectively.
The most pronounced gains occur on the wrist camera (SSIM +9.1\%, PSNR +1.59~dB). This indicates that our closed-loop-aware post-training specifically excels at capturing the fine-grained object contact and hand-eye coordination details critical for downstream policy evaluation. Qualitatively, Fig.~\ref{fig:qualitative} confirms these improvements, while Fig.~\ref{fig:short} illustrates a consistent distribution shift toward higher-fidelity generations. A 1-to-1 paired comparison reveals that our model outperforms the baseline in $\sim$98\% of validation samples, demonstrating the consistency of our gains across the entire dataset. Finally, Fig.~\ref{fig:temporal-evolution} analyzes performance over extended rollouts; while both models naturally degrade as the horizon increases, our method maintains significantly higher fidelity over time.

\paragraphcustom{Comparison to Self-Forcing style baseline.}
Recent works such as Self-Forcing~\cite{self-forcing, rolling-forcing} are not directly applicable to our case since they primarily distill from a (usually larger) bidirectional teacher, which is not available in our case. However, to measure the importance of our RL objective over the training on corrupted history, we post-trained the model on its own rollout history but ask it to regress the ground truth latents. On the validation set with rollout length of 10 interactions, the visual metrics favor ours: wrist \textbf{0.721}/\textbf{20.40}/0.338 vs SF 0.710/20.17/0.338; external \textbf{Ours }\textbf{0.872}/\textbf{25.52}/\textbf{0.116} vs 0.868/25.35/0.119. 

\begin{table}[tb]
\centering
\caption{
\textbf{Visual quality metrics for 14-step autoregressive rollouts ($\approx$11~s) on the DROID validation split.} Values represent averages over the full rollout duration. Results marked with $^{*}$ are from~\cite{ctrlworld}; $^{\dagger}$ indicates our reproduction.
}
\label{tab:visual-results}
\small
\begin{tabular}{llccc}
\toprule
\multirow{2}{*}{\begin{tabular}[c]{@{}l@{}}Evaluated\\Cameras\end{tabular}} & 
\multirow{2}{*}{Model} & 
\multicolumn{2}{c}{Pixel/Structure} & 
\multicolumn{1}{c}{Perceptual} \\
\cmidrule(lr){3-4}\cmidrule(lr){5-5}
 & & SSIM $\uparrow$ & PSNR $\uparrow$ & LPIPS $\downarrow$ \\
\midrule
\multirow{5}{*}{\begin{tabular}[c]{@{}l@{}}External\\Camera\end{tabular}} 
 & WPE$^{*}$ & 0.77 & 20.33 & 0.131 \\
 & IRASim$^{*}$ & 0.77 & 21.36 & 0.117 \\
 & Ctrl-World$^{*}$ & 0.83 & 23.56 & 0.091 \\
 & Ctrl-World$^{\dagger}$ & 0.84 & 23.02 & 0.081 \\
 & \textbf{Ours} & \textbf{0.86} & \textbf{24.42} & \textbf{0.070} \\
\midrule
\multirow{2}{*}{\begin{tabular}[c]{@{}l@{}}Wrist\\Camera\end{tabular}} 
 & Ctrl-World$^{\dagger}$ & 0.62 & 17.80 & 0.310 \\
 & \textbf{Ours} & \textbf{0.67} & \textbf{19.39} & \textbf{0.277} \\
\bottomrule
\end{tabular}
\vspace{-3mm}
\end{table}

\begin{figure}[tb]
  \centering
  \includegraphics[width=0.48\textwidth]{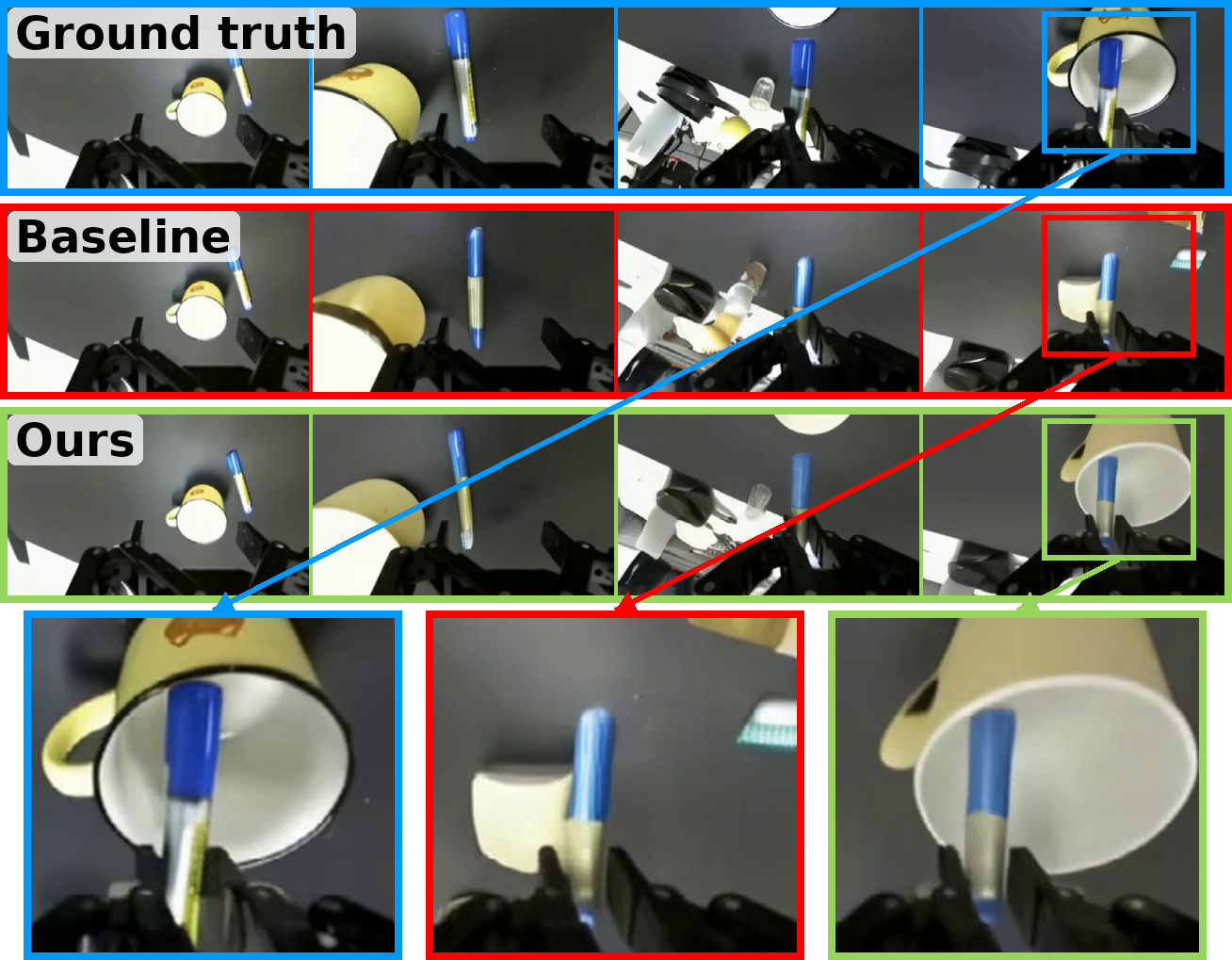}
  \includegraphics[width=0.48\textwidth]{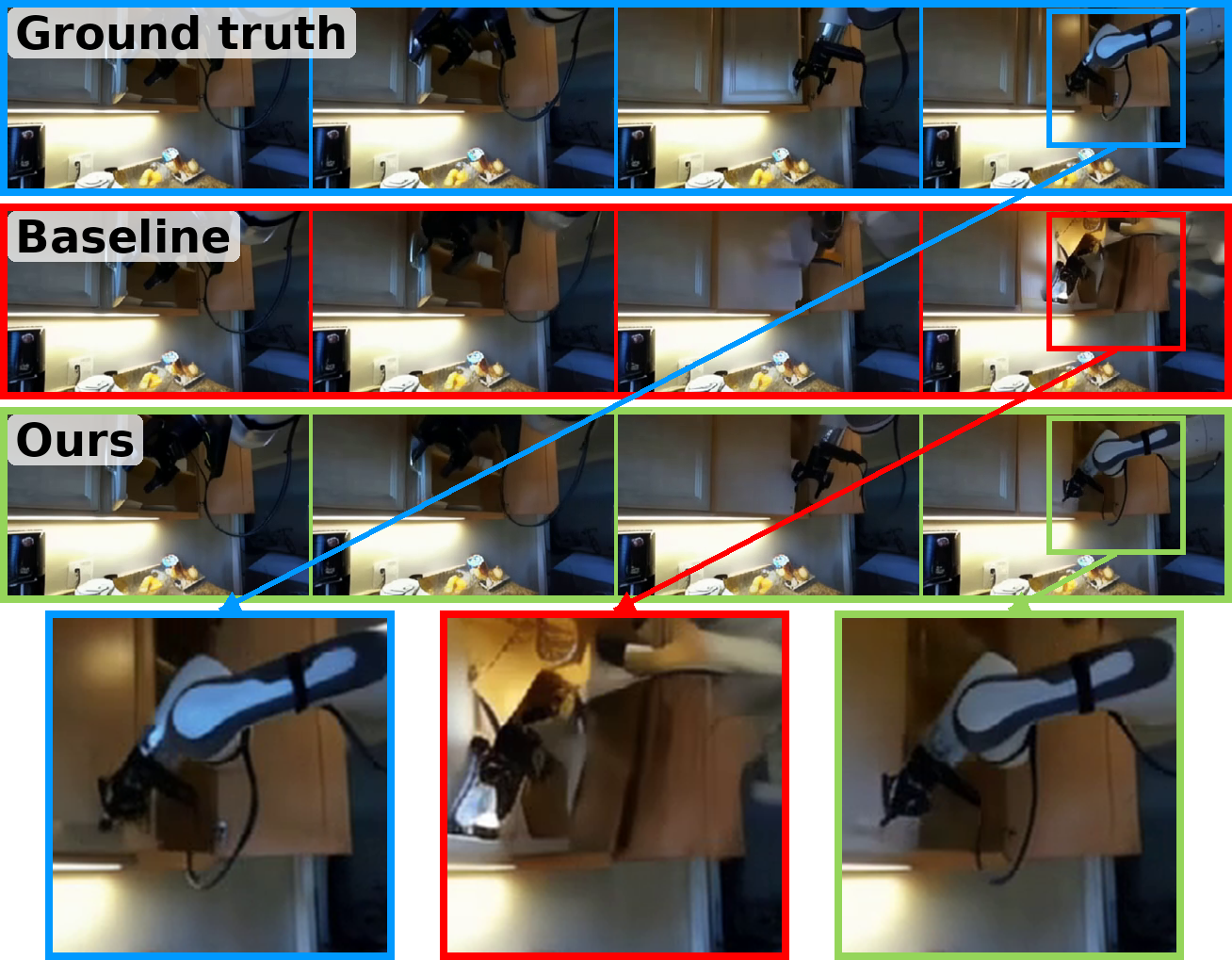}
  \caption{
  \textbf{Qualitative comparison of autoregressive rollout stability.} We compare long-horizon (11~s) generations from the \textcolor{myRed}{baseline}~\cite{ctrlworld} against our \textcolor{myGreen}{PersistWorld} for the wrist camera. 
    \textbf{Left: Object-centric fidelity.} The baseline model suffers from rapid decoherence; as errors compound in the history buffer, manipulated objects like the cup lose their structural identity and dissolve into amorphous textures. In contrast, our method maintains the spatial consistency and structural integrity of the object throughout the rollout. 
   \textbf{Right: Robot-centric consistency.} The baseline exhibits significant  \textbf{robot decoherence}, where the generated robot arm loses their geometric structure. Our approach maintains structural persistence. {\bf Please see additional video results on the associated project page.}
    }
    \vspace{-5mm}
  \label{fig:qualitative}
\end{figure}

\begin{figure}[tb]
  \centering
  \begin{subfigure}{0.32\linewidth}
  \includegraphics[width=\textwidth]{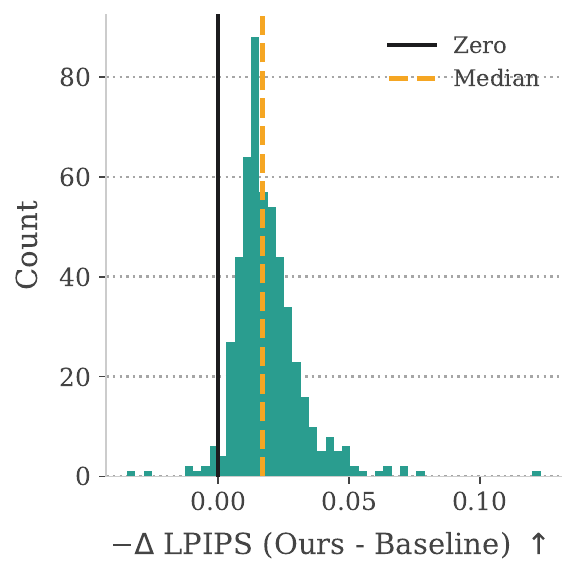}
    \label{fig:short-a}
  \end{subfigure}
  \hfill
  \begin{subfigure}{0.32\linewidth}
    \includegraphics[width=\textwidth]{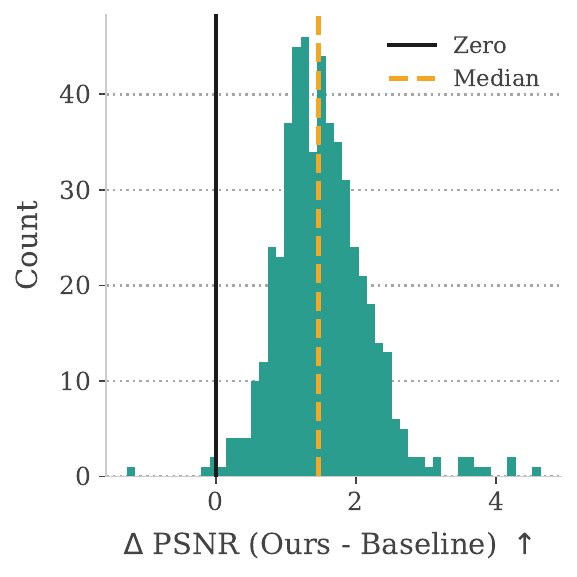}
    \label{fig:short-b}
  \end{subfigure}
  \hfill
  \begin{subfigure}{0.32\linewidth}
    \includegraphics[width=\textwidth]{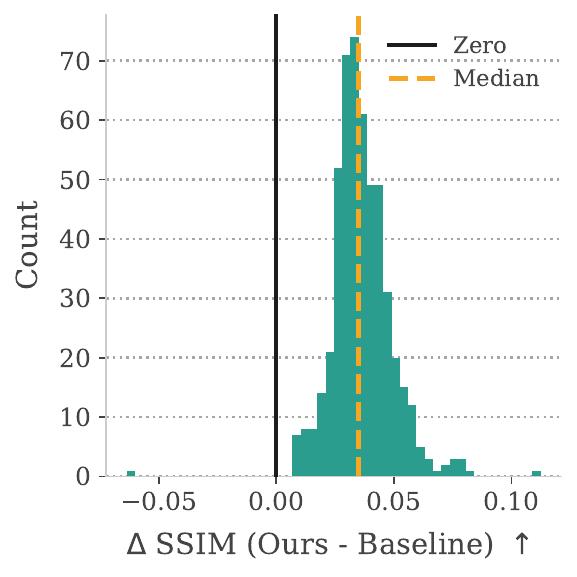}
    \label{fig:short-c}
  \end{subfigure}
  \vspace*{-6mm}
  \caption{$\Delta_{\text{metric}}$ of paired videos from the validation dataset. On $1-1$ paired comparison, our PersistWorld world model is better than the baseline on $\sim98\%$ of the sample ($p < 10^{-6}$). }
  \label{fig:short}
\end{figure}

\begin{figure}[tb]
  \centering
  \begin{subfigure}{0.32\linewidth}
  \includegraphics[width=\textwidth]{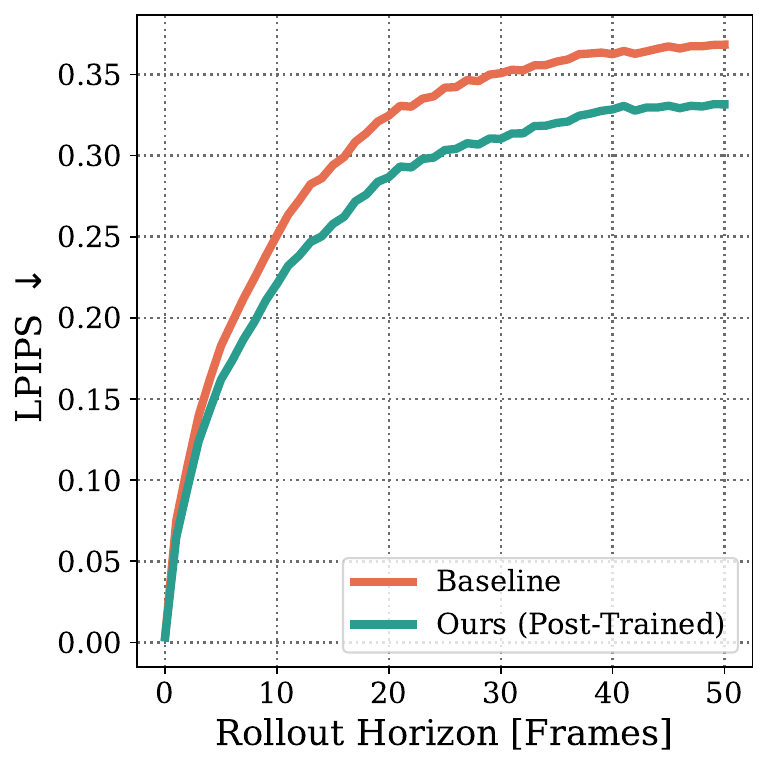}
  \end{subfigure}
  \hfill
  \begin{subfigure}{0.32\linewidth}
    \includegraphics[width=\textwidth]{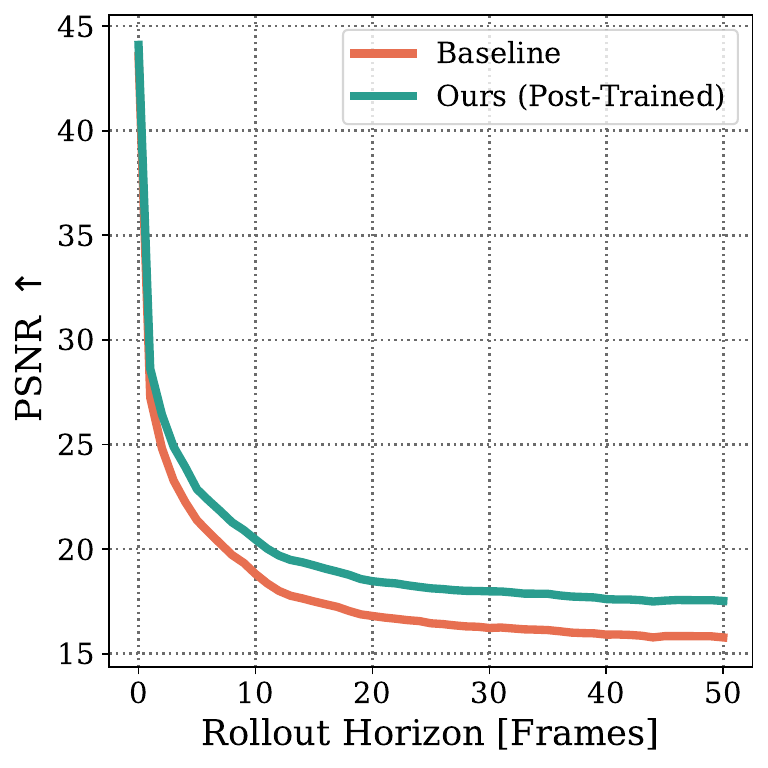}
  \end{subfigure}
  \hfill
  \begin{subfigure}{0.32\linewidth}
    \includegraphics[width=\textwidth]{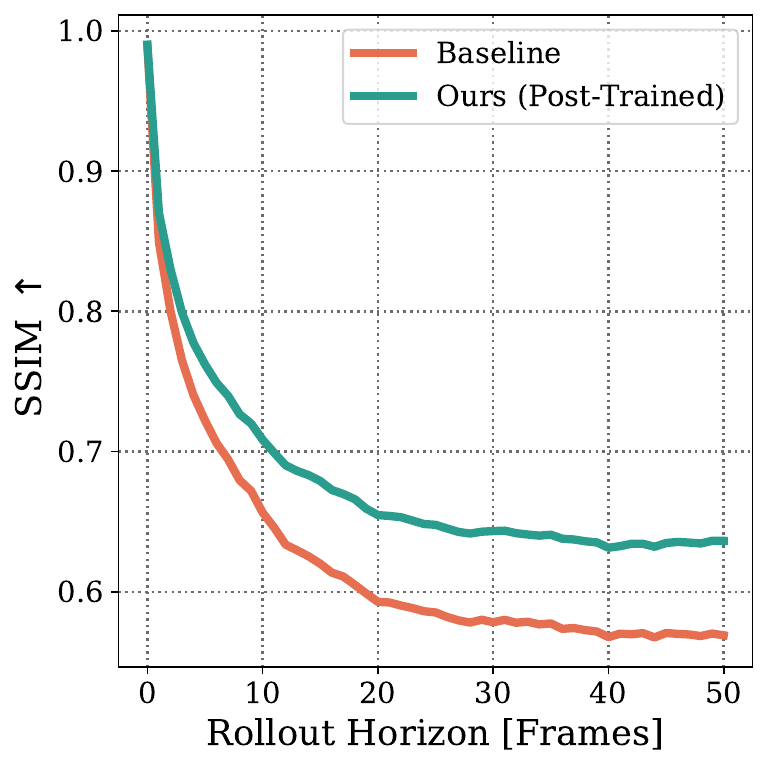}
  \end{subfigure}
  \vspace*{-2mm}
  \caption{
      \textbf{Temporal evolution of wrist camera metrics.} While both models exhibit natural degradation over longer horizons (x-axis), our post-trained model, PersistWorld (green), consistently maintains higher fidelity and slower error accumulation compared to the baseline (orange). Specifically, our method preserves a higher PSNR and SSIM while suppressing LPIPS drift, effectively extending the stable prediction horizon for complex, fine-grained interactions. See Fig.~\ref{fig:external-cam-temporal-evolution} in \textbf{Appendix.~\ref{app:additional-results}} for external camera results.
  }
  \label{fig:temporal-evolution}
  \vspace*{-4mm}
\end{figure}

\paragraphcustom{Object- and Robot-Centric Evaluation.} General-purpose world models often achieve high full-frame scores by over-optimizing for static background preservation while failing to capture the complex dynamics of manipulated objects. To evaluate task-relevant fidelity, we isolate the foreground using RoboEngine~\cite{yuan2025roboengineplugandplayrobotdata} to segment interacting objects and the robot arm. Computing metrics on these masked regions provides a rigorous measure of spatial and control consistency, which is more critical for downstream policy learning than raw background reconstruction.
Table~\ref{tab:visual-results-masked} confirms that our model's gains are concentrated on these task-critical regions. On object-masked pixels, our improvements are even more pronounced than full-frame results: external camera LPIPS drops by $16.3\%$ (vs. $14.0\%$ full-frame), while wrist-camera SSIM improves by $4.1\%$. We observe similar trends for robot-centric metrics, with PSNR increasing by $1.63$~dB and $1.74$~dB for external and wrist views, respectively. These results demonstrate that our training objective successfully captures the intricate dynamics of robot-object interactions rather than relying on incidental background fidelity.

\begin{table}[tp]
\centering
\caption{
\textbf{Masked visual metrics for 14-step autoregressive rollouts ($\approx$11~s)}. We isolate object-only and robot-only pixels to evaluate task-relevant spatial and control consistency. Our model demonstrates superior fidelity in these dynamic regions compared to baselines, confirming that performance gains are driven by accurate interaction modeling rather than background reconstruction. All metrics are averaged over the full rollout horizon.
}
\label{tab:visual-results-masked}
\begin{tabular}{@{}cccccccc@{}}
\toprule
\multirow{2}{*}{\begin{tabular}[c]{@{}c@{}}Evaluated\\Cameras\end{tabular}} &
\multirow{2}{*}{Model} &
\multicolumn{3}{c}{Object-Only} &
\multicolumn{3}{c}{Robot-Only} \\
\cmidrule(lr){3-5}\cmidrule(lr){6-8}
& &
SSIM $\uparrow$ & PSNR $\uparrow$ & LPIPS $\downarrow$ &
SSIM $\uparrow$ & PSNR $\uparrow$ & LPIPS $\downarrow$ \\
\midrule
\multirow[c]{2}{*}{\begin{tabular}[c]{@{}c@{}}External\\Camera\end{tabular}} &
Ctrl-World$^{\dagger}$ & 0.88 & 22.25 & 0.025 & 0.82 & 17.62 & 0.039 \\
& \textbf{Ours} & \textbf{0.89} & \textbf{23.60} & \textbf{0.021} & \textbf{0.86} & \textbf{19.25} & \textbf{0.033}\\
\midrule
\multirow[c]{2}{*}{\begin{tabular}[c]{@{}c@{}}Wrist\\Camera\end{tabular}} &
Ctrl-World$^{\dagger}$ & 0.73 & 18.52 & 0.088 & 0.83 & 25.50 & 0.027  \\
& \textbf{Ours} & \textbf{0.76} & \textbf{19.87} & \textbf{0.078} & \textbf{0.86} & \textbf{27.24} & \textbf{0.023} \\
\bottomrule
\end{tabular}
\vspace{-5mm}
\end{table}

\paragraphcustom{Human Preference Study.} To complement automated evaluation, we conducted a blind human preference study to assess perceived realism and temporal consistency. Raters were presented with side-by-side video pairs from our model and the baseline, alongside the ground-truth video as a reference. They were tasked with selecting the rollout that appeared most realistic and remained most consistent with the ground-truth dynamics.
Our model significantly outperforms the baseline, achieving an $80\%$ preference rate ($174$ wins vs. $43$).This substantial margin is further reflected
 in the Elo ratings, where our model reaches $884.8$ compared to the baseline's $715.2$. 
 
\begin{wrapfigure}{r}{0.45\textwidth}
  \centering
  \vspace{-20pt}
  \includegraphics[width=0.45\textwidth]{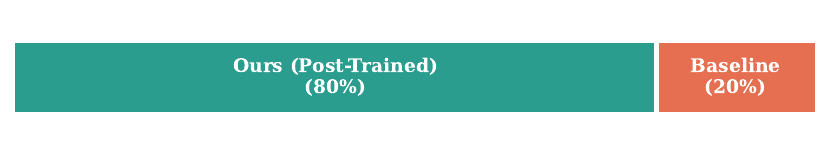}
  \vspace{-25pt}
  \caption{Human preference results.}
  \label{fig:human-pref}
  \vspace{-20pt}
\end{wrapfigure}
 These results confirm that our quantitative gains translate to a qualitatively superior experience, with human observers consistently favoring our model's ability to maintain coherent dynamics over long horizons when compared directly against the ground-truth reference.

\paragraphcustom{Generalization beyond DROID.} 
We finetuned the SVD backbone on Bridge (WidowX)~\cite{ebert2021bridge} and RT-1 (Google Robot)~\cite{rt1} for 100k steps, then applied PersistWorld post-training. PersistWorld post-training improves all three metrics on both embodiments over the SVD backbone. On Bridge, SSIM $0.864\rightarrow0.876$, PSNR $25.20\rightarrow26.36$, LPIPS $0.1061\rightarrow0.0957$; on RT-1, SSIM $0.833\rightarrow0.865$, PSNR $23.81\rightarrow26.63$, LPIPS $0.1683\rightarrow0.1305$.

\paragraphcustom{Policy Evaluation.}
Along with the visual metrics, we also evaluate the correlation between the world-model-based policy evaluations and the real-world policy evaluations. Across three tasks and three policies we find that PersistWorld has better correlation ($0.822$ vs $0.796$) and MMRV ($0.006$ vs $0.053$) than Ctrl-World. 
See Appendix~\ref{app:wm-policy-eval} for more details.

\paragraphcustom{Policy Improvement.}
We ran a Ctrl-World-style policy improvement experiment: $\pi_0$ is full-finetuned on 50 rollouts for 4 block spatial picking task from each WM, then evaluated on a real-robot (5 trials/block). Prompt: ``Pick up the top-left/top-right/bottom-left/bottom-right block''.
Average task progression: base 0.350, +Ctrl-World 0.558, +\textbf{Ours 0.667} (\textbf{$\mathbf{+19\%}$ over Ctrl-World}).

\section{Conclusion}

In this paper, we addressed the critical challenge of exposure bias in action-conditioned robot world models. While existing diffusion-based world models produce high-fidelity short-term clips, their utility as simulators has been limited by compounding errors during autoregressive deployment. We introduced a reinforcement learning post-training framework that bridges this ``closed-loop gap" by training the model on its own generated rollouts rather than ground-truth teacher forcing.

Our technical contributions—adapting a contrastive online reinforcement learning objective to $x_0$-prediction backbones and designing a variable-length branching training protocol with multi-view perceptual rewards—enable the model to remain stable over longer horizons. Our approach establishes a new state-of-the-art on the DROID dataset, significantly reducing perceptual drift and maintaining structural integrity. The 80\% preference rate in our human study and the marked improvement in object and robot-centric metrics suggest that RL post-training is a powerful tool for transforming video generators into reliable, persistent robot simulators. By stabilizing multi-step rollouts, this work paves the way for using world models as scalable, high-fidelity virtual environments for the evaluation and improvement of general-purpose robotic policies.

\paragraphcustom{Limitations and Future Work.} Despite these gains, our approach has limitations. The group-relative training protocol requires sampling
$K=16$ 
independent candidates per update, which increases computational overhead during post-training compared to standard supervised fine-tuning. However, a significant advantage of our framework is its modularity; the contrastive objective is reward-agnostic, meaning the model can be optimized against any combination of perceptual, physical, or task-specific signals. While our current implementation utilizes visual fidelity rewards (LPIPS, SSIM, PSNR) to stabilize the rollouts, these do not yet explicitly enforce physical or geometrical constraints.
Future work will leverage this flexibility to explore more consistency rewards—including physics-informed constraints and geometry-aware metrics—to further enhance physical realism. Additionally, we intend to investigate the application of these persistent world models directly within the policy optimization loop.


\section*{Acknowledgements}
This work was supported by the European Union's Horizon Europe projects AGIMUS (No. 101070165), euROBIN (No. 101070596), ERC FRONTIER (No. 101097822), ELIAS (No. 101120237) and ELLIOT (No. 101214398). Compute resources and infrastructure were supported by the Ministry of Education, Youth and Sports of the Czech Republic through the e-INFRA CZ (ID:90254) and by the European Union's Horizon Europe project CLARA (No. 101136607).

\bibliographystyle{splncs04}
\bibliography{main}

\clearpage
\appendix
\renewcommand{\theHsection}{appendix.\arabic{section}}
\thispagestyle{empty}

\noindent{\LARGE\textbf{Appendix}}
\vspace{3em}

This appendix provides supplementary material organized into five sections. \Cref{app:diffusionnft-x0} gives a complete, self-contained derivation of the post-training loss used to train our model, including a formal theoretical analysis showing why minimizing the loss steers the model toward high-reward outputs. \Cref{app:ablations} presents ablation studies that isolate the contribution of each key design choice, evaluated on the validation split. \Cref{app:wm-policy-eval} examines the use of the world model as a policy evaluation tool, measuring task progression rates across three manipulation tasks. \Cref{app:human-preference-study-details} details the human preference study — participant qualifications, the two-alternative forced-choice (2AFC) interface, and the ELO-based ranking protocol used to aggregate votes. Finally, \Cref{app:algorithm} provides the additional details and full pseudocode for the RL post-training procedure.

\section{Derivation of the Post-Training Objective}
\label{app:diffusionnft-x0}

This appendix provides a complete, self-contained derivation of the post-training objective described in section~\ref{sec:rl} in the main paper. We re-derive the objective from scratch in the $x_0$-prediction parameterization used by our model.

\subsection*{Notation}

We use the following notation throughout this appendix.
\begin{itemize}
  \item $\mathbf{x}_0 \in \mathbb{R}^d$: a clean (noiseless) latent video clip — the quantity the model is trained to predict.
  \item $\mathbf{c} \in \mathbb{R}^{d_c}$: the conditioning signal, comprising the robot action sequence together with any visual or text context provided to the model.
  \item $\sigma > 0$: the noise level. Following the EDM convention~\cite{karras2022elucidatingdesignspacediffusionbased}, a noisy observation is drawn as $\mathbf{x}_\sigma = \mathbf{x}_0 + \sigma\boldsymbol{\varepsilon}$ where $\boldsymbol{\varepsilon} \sim \mathcal{N}(\mathbf{0},\mathbf{I})$ is isotropic Gaussian noise.
  \item $\hat{\mathbf{x}}_{0,\theta} \equiv \hat{\mathbf{x}}_{0,\theta}(\mathbf{x}_\sigma, \sigma, \mathbf{c}) \in \mathbb{R}^d$: the current model's estimate of the clean latent $\mathbf{x}_0$, given noisy input $\mathbf{x}_\sigma$, noise level $\sigma$, and conditioning $\mathbf{c}$.
  \item $\hat{\mathbf{x}}_0^{\mathrm{old}} \equiv \hat{\mathbf{x}}_0^{\mathrm{old}}(\mathbf{x}_\sigma, \sigma, \mathbf{c}) \in \mathbb{R}^d$: the prediction of the \emph{frozen reference model} — a copy of the weights fixed at the start of post-training that is never updated, representing the pre-trained baseline.
  \item $r(\mathbf{x}_0, \mathbf{c}) \in [0,1]$: the normalized reward weight for a generated sample $\mathbf{x}_0$ given conditioning $\mathbf{c}$. It is computed from visual quality metrics within a group of candidates; $r{=}1$ is the best in the group, $r{=}0$ is the worst (see Sec.~\ref{sec:rewards}).
\end{itemize}

\subsection{EDM Preconditioning and the Post-Training Loss}
\label{app:diffusionnft-x0-branches}

\paragraph{EDM preconditioning.}
The world model backbone outputs a corrective term $\mathbf{m}_\theta(\mathbf{x}_\sigma, \sigma, \mathbf{c}) \in \mathbb{R}^d$, which is converted to a clean-latent estimate via the affine EDM preconditioning~\cite{karras2022elucidatingdesignspacediffusionbased}:
\begin{equation}
  \hat{\mathbf{x}}_{0,\theta}
  =
  c_{\mathrm{out}}(\sigma)\,\mathbf{m}_\theta(\mathbf{x}_\sigma, \sigma, \mathbf{c})
  +
  c_{\mathrm{skip}}(\sigma)\,\mathbf{x}_\sigma,
  \label{eq:app-edm-precon}
\end{equation}
where $c_{\mathrm{out}}(\sigma){=}{-}\sigma/\!\sqrt{\sigma^2{+}1}$ and $c_{\mathrm{skip}}(\sigma){=}1/(\sigma^2{+}1)$ are scalar functions of the noise level only. The $c_{\mathrm{skip}}$ term adds back a residual of the noisy input $\mathbf{x}_\sigma$; $c_{\mathrm{out}}$ scales the network's corrective output $\mathbf{m}_\theta$. The identical formula applies to the frozen reference model:
\begin{equation}
  \hat{\mathbf{x}}_0^{\mathrm{old}} = c_{\mathrm{out}}(\sigma)\,\mathbf{m}^{\mathrm{old}}(\mathbf{x}_\sigma, \sigma, \mathbf{c}) + c_{\mathrm{skip}}(\sigma)\,\mathbf{x}_\sigma.
  \label{eq:app-edm-precon-ref}
\end{equation}

\paragraph{Positive and negative branches.}
Given the current-model prediction $\hat{\mathbf{x}}_{0,\theta}$ and the reference prediction $\hat{\mathbf{x}}_0^{\mathrm{old}}$, we construct two branch predictions:
\begin{align}
  \hat{\mathbf{x}}_{0,\theta}^+ &:= (1-\beta)\,\hat{\mathbf{x}}_0^{\mathrm{old}} + \beta\,\hat{\mathbf{x}}_{0,\theta},
  \label{eq:app-pos-branch}\\
  \hat{\mathbf{x}}_{0,\theta}^- &:= (1+\beta)\,\hat{\mathbf{x}}_0^{\mathrm{old}} - \beta\,\hat{\mathbf{x}}_{0,\theta}.
  \label{eq:app-neg-branch}
\end{align}
The \textbf{positive branch} $\hat{\mathbf{x}}_{0,\theta}^+$ interpolates from the reference toward the current model: at $\beta{=}0$ it equals the reference exactly; as $\beta$ grows it moves toward the current model's own prediction. The \textbf{negative branch} $\hat{\mathbf{x}}_{0,\theta}^-$ is its mirror image: it moves \emph{away} from the current model in the same direction the current model has moved from the reference. Together, the two branches bracket the reference prediction symmetrically:
\begin{equation}
  \hat{\mathbf{x}}_{0,\theta}^+ + \hat{\mathbf{x}}_{0,\theta}^- = 2\,\hat{\mathbf{x}}_0^{\mathrm{old}},
  \qquad
  \hat{\mathbf{x}}_{0,\theta}^+ - \hat{\mathbf{x}}_{0,\theta}^- = 2\beta\,\underbrace{(\hat{\mathbf{x}}_{0,\theta} - \hat{\mathbf{x}}_0^{\mathrm{old}})}_{\text{current model's drift from reference}}.
  \label{eq:app-branch-symmetry}
\end{equation}


\paragraph{The post-training loss.}
The post-training objective weights the squared reconstruction errors of the two branches
by the reward weight $r$:
\begin{equation}
  \mathcal{L}(\theta) = \mathbb{E}\Bigl[r\,\|\hat{\mathbf{x}}_{0,\theta}^+ - \mathbf{x}_0\|_2^2
  + (1-r)\,\|\hat{\mathbf{x}}_{0,\theta}^- - \mathbf{x}_0\|_2^2\Bigr],
  \label{eq:app-loss}
\end{equation}
where the expectation is over $(\mathbf{c}, \sigma, \mathbf{x}_\sigma, \mathbf{x}_0)$ drawn jointly. This is identical to Eq.~\eqref{eq:diffusionnft-x0-loss} in the main text. The intuition is direct: for a \emph{high}-reward sample ($r \approx 1$), we minimize the error of the positive branch, which points in the direction the current model has drifted from the reference — thereby \emph{reinforcing} that drift direction. For a \emph{low}-reward sample ($r \approx 0$), we minimize the error of the negative branch, which points in the opposite direction — thereby \emph{penalizing and reversing} that drift. This is the core mechanism by which the loss steers the model toward high-reward outputs.

\subsection{Theoretical Analysis: Why This Loss Improves the Model}
\label{app:diffusionnft-x0-theorems}

We now prove that, under standard assumptions, the unique minimizer of $\mathcal{L}(\theta)$ is a model that has moved precisely in the direction of high-reward samples relative to the reference. The argument proceeds in four steps: (1) decompose the reference distribution into a high-reward part and a low-reward part; (2) show this decomposition lifts to the posterior over clean latents given a noisy observation; (3) identify the reward-aligned direction; (4) show the loss collapses to a single squared error pointing in that direction.

\paragraph{Step 1: Decomposing the reference distribution.}

Let $\pi^{\mathrm{old}}(\mathbf{x}_0|\mathbf{c})$ denote the distribution over clean latents generated by the frozen reference model given conditioning $\mathbf{c}$. We model the reward weight as the conditional probability that a sample $\mathbf{x}_0$ is ``optimal'' given $\mathbf{c}$: introducing a latent binary optimality label $o \in \{0,1\}$, we set
\begin{equation}
  r(\mathbf{x}_0, \mathbf{c}) := P(o=1 \mid \mathbf{x}_0, \mathbf{c}) \in [0,1].
\end{equation}
Define the \emph{positive distribution} $\pi^+$ (samples conditioned on being optimal) and \emph{negative distribution} $\pi^-$ (samples conditioned on being suboptimal) via Bayes' rule:
\begin{align}
  \pi^{+}(\mathbf{x}_0|\mathbf{c})
  &:= P(\mathbf{x}_0 \mid o{=}1, \mathbf{c})
  = \frac{r(\mathbf{x}_0,\mathbf{c})}{Z(\mathbf{c})}\,\pi^{\mathrm{old}}(\mathbf{x}_0|\mathbf{c}),
  \label{eq:app-pi-plus}\\
  \pi^{-}(\mathbf{x}_0|\mathbf{c})
  &:= P(\mathbf{x}_0 \mid o{=}0, \mathbf{c})
  = \frac{1 - r(\mathbf{x}_0,\mathbf{c})}{1 - Z(\mathbf{c})}\,\pi^{\mathrm{old}}(\mathbf{x}_0|\mathbf{c}),
  \label{eq:app-pi-minus}
\end{align}
where
\begin{equation}
  Z(\mathbf{c}) := \mathbb{E}_{\pi^{\mathrm{old}}(\mathbf{x}_0|\mathbf{c})}[r(\mathbf{x}_0,\mathbf{c})] \in (0,1)
  \label{eq:app-Z}
\end{equation}
is the \emph{partition function} — the expected reward under the reference model, which normalizes $\pi^+$ and $\pi^-$ to be valid probability distributions. By the law of total probability, the reference distribution is a weighted mixture of its two parts:
\begin{equation}
  \pi^{\mathrm{old}}(\mathbf{x}_0|\mathbf{c})
  = Z(\mathbf{c})\,\pi^{+}(\mathbf{x}_0|\mathbf{c}) + (1-Z(\mathbf{c}))\,\pi^{-}(\mathbf{x}_0|\mathbf{c}).
  \label{eq:app-prior-mixture}
\end{equation}
\emph{Intuition.} The reference model generates samples from $\pi^{\mathrm{old}}$; a fraction $Z(\mathbf{c})$ of those happen to be high-quality (distributed as $\pi^+$, up-weighted by their reward $r$) and the rest are low-quality (distributed as $\pi^-$, up-weighted by $1-r$). The reward simply measures the relative density of $\pi^+$ with respect to $\pi^{\mathrm{old}}$.

\paragraph{Step 2: Lifting the decomposition to noisy observations.}

The forward noising kernel is $q_\sigma(\mathbf{x}_\sigma|\mathbf{x}_0) = \mathcal{N}(\mathbf{x}_\sigma;\mathbf{x}_0,\sigma^2\mathbf{I})$. Let $\pi^\star(\mathbf{x}_\sigma|\mathbf{c}) := \int q_\sigma(\mathbf{x}_\sigma|\mathbf{x}_0)\,\pi^\star(\mathbf{x}_0|\mathbf{c})\,\mathrm{d}\mathbf{x}_0$ be the marginal distribution of the noisy observation under $\star \in \{\mathrm{old},+,-\}$, and let $\pi^\star(\mathbf{x}_0|\mathbf{x}_\sigma,\mathbf{c})$ denote the corresponding posterior over clean latents given a noisy observation $\mathbf{x}_\sigma$.

\begin{lemma}[Posterior mixture]
\label{lem:posterior-mixture}
The posterior $\pi^{\mathrm{old}}(\mathbf{x}_0|\mathbf{x}_\sigma,\mathbf{c})$ is a mixture of the positive and negative posteriors:
\begin{equation}
  \pi^{\mathrm{old}}(\mathbf{x}_0|\mathbf{x}_\sigma,\mathbf{c})
  =
  \alpha(\mathbf{x}_\sigma,\mathbf{c})\,\pi^{+}(\mathbf{x}_0|\mathbf{x}_\sigma,\mathbf{c})
  +
  \bigl(1-\alpha(\mathbf{x}_\sigma,\mathbf{c})\bigr)\,\pi^{-}(\mathbf{x}_0|\mathbf{x}_\sigma,\mathbf{c}),
  \label{eq:app-posterior-mixture}
\end{equation}
where the data-dependent mixing weight (abbreviated $\alpha \equiv \alpha(\mathbf{x}_\sigma,\mathbf{c})$ below) is
\begin{equation}
  \alpha(\mathbf{x}_\sigma,\mathbf{c})
  :=
  \frac{Z(\mathbf{c})\,\pi^{+}(\mathbf{x}_\sigma|\mathbf{c})}{\pi^{\mathrm{old}}(\mathbf{x}_\sigma|\mathbf{c})}
  \in [0,1].
  \label{eq:app-alpha}
\end{equation}
\end{lemma}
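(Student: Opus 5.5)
The plan is a direct Bayes-rule computation that pushes the prior mixture \eqref{eq:app-prior-mixture} through the Gaussian noising kernel $q_\sigma$. The kernel does not depend on which of $\pi^{\mathrm{old}},\pi^+,\pi^-$ generated $\mathbf{x}_0$, so it commutes with the mixture.

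First, I integrate \eqref{eq:app-prior-mixture} against $q_\sigma(\mathbf{x}_\sigma|\mathbf{x}_0)$. By linearity of the integral this gives the marginal mixture
\begin{equation*}
\pi^{\mathrm{old}}(\mathbf{x}_\sigma|\mathbf{c}) = Z(\mathbf{c})\,\pi^{+}(\mathbf{x}_\sigma|\mathbf{c}) + (1-Z(\mathbf{c}))\,\pi^{-}(\mathbf{x}_\sigma|\mathbf{c}).
\end{equation*}
Second, I write each posterior via Bayes' rule as
\begin{equation*}
\pi^\star(\mathbf{x}_0|\mathbf{x}_\sigma,\mathbf{c}) = \frac{q_\sigma(\mathbf{x}_\sigma|\mathbf{x}_0)\,\pi^\star(\mathbf{x}_0|\mathbf{c})}{\pi^\star(\mathbf{x}_\sigma|\mathbf{c})}.
\end{equation*}
Applying this with $\star=\mathrm{old}$, I substitute \eqref{eq:app-prior-mixture} in the numerator and split the result into two terms. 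In the first term I multiply and divide by $\pi^+(\mathbf{x}_\sigma|\mathbf{c})$, which turns it into
\begin{equation*}
\frac{Z(\mathbf{c})\,\pi^+(\mathbf{x}_\sigma|\mathbf{c})}{\pi^{\mathrm{old}}(\mathbf{x}_\sigma|\mathbf{c})}\;\pi^+(\mathbf{x}_0|\mathbf{x}_\sigma,\mathbf{c}) = \alpha\,\pi^+(\mathbf{x}_0|\mathbf{x}_\sigma,\mathbf{c}).
\end{equation*}
The same manipulation with $\pi^-$ gives the second term as
\begin{equation*}
\frac{(1-Z(\mathbf{c}))\,\pi^-(\mathbf{x}_\sigma|\mathbf{c})}{\pi^{\mathrm{old}}(\mathbf{x}_\sigma|\mathbf{c})}\;\pi^-(\mathbf{x}_0|\mathbf{x}_\sigma,\mathbf{c}).
\end{equation*}
Dividing the marginal mixture identity from the first step by $\pi^{\mathrm{old}}(\mathbf{x}_\sigma|\mathbf{c})$ shows that this coefficient equals $1-\alpha$, which yields \eqref{eq:app-posterior-mixture}.

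Third, the bound $\alpha\in[0,1]$ follows from the same marginal identity. Both summands $Z\pi^+(\mathbf{x}_\sigma|\mathbf{c})$ and $(1-Z)\pi^-(\mathbf{x}_\sigma|\mathbf{c})$ are nonnegative and sum to $\pi^{\mathrm{old}}(\mathbf{x}_\sigma|\mathbf{c})$. Nonnegativity of $r$ and $1-r$, together with $Z\in(0,1)$, makes $\pi^\pm$ valid densities, so each summand's share lies in $[0,1]$.

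The only delicate point is well-definedness of the divisions. The marginal $\pi^{\mathrm{old}}(\mathbf{x}_\sigma|\mathbf{c})$ is strictly positive everywhere for $\sigma>0$, because it is a Gaussian convolution of a probability measure. The same holds for $\pi^\pm(\mathbf{x}_\sigma|\mathbf{c})$, since $Z\in(0,1)$ ensures both $\pi^+$ and $\pi^-$ are normalized probability measures. I would state this positivity explicitly and otherwise treat the argument as routine algebra.
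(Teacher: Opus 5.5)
Your proposal is correct and follows essentially the same route as the paper. Both push the prior mixture through $q_\sigma$ to get the marginal mixture, apply Bayes' rule to $\pi^{\mathrm{old}}(\mathbf{x}_0|\mathbf{x}_\sigma,\mathbf{c})$ with per-component Bayes to identify $\alpha$, and read off both the $1-\alpha$ coefficient and $\alpha\in[0,1]$ from the marginal identity divided by $\pi^{\mathrm{old}}(\mathbf{x}_\sigma|\mathbf{c})$; your remark that the Gaussian-convolved marginals are strictly positive for $\sigma>0$, so every division is well defined, is a small rigor point the paper leaves implicit.
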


\begin{proof}
Marginalise the prior mixture (Eq.~\eqref{eq:app-prior-mixture}) through $q_\sigma$. Since $q_\sigma$ does not depend on the optimality label $o$, it acts identically on each mixture component:
\begin{equation}
  \pi^{\mathrm{old}}(\mathbf{x}_\sigma|\mathbf{c}) = Z(\mathbf{c})\,\pi^+(\mathbf{x}_\sigma|\mathbf{c}) + (1-Z(\mathbf{c}))\,\pi^-(\mathbf{x}_\sigma|\mathbf{c}).
  \label{eq:app-marginal-mixture}
\end{equation}
Apply Bayes' rule to $\pi^{\mathrm{old}}(\mathbf{x}_0|\mathbf{x}_\sigma,\mathbf{c})$ and substitute Eq.~\eqref{eq:app-prior-mixture}:
\begin{align}
  \pi^{\mathrm{old}}(\mathbf{x}_0|\mathbf{x}_\sigma,\mathbf{c})
  &= \frac{q_\sigma(\mathbf{x}_\sigma|\mathbf{x}_0)\,\pi^{\mathrm{old}}(\mathbf{x}_0|\mathbf{c})}{\pi^{\mathrm{old}}(\mathbf{x}_\sigma|\mathbf{c})} \nonumber\\
  &= \frac{q_\sigma(\mathbf{x}_\sigma|\mathbf{x}_0)}{\pi^{\mathrm{old}}(\mathbf{x}_\sigma|\mathbf{c})}
    \Bigl[Z\,\pi^+(\mathbf{x}_0|\mathbf{c}) + (1{-}Z)\,\pi^-(\mathbf{x}_0|\mathbf{c})\Bigr].
\end{align}
For each mixture component, Bayes' rule gives
\begin{equation*}
  q_\sigma(\mathbf{x}_\sigma|\mathbf{x}_0)\,\pi^\star(\mathbf{x}_0|\mathbf{c})
  = \pi^\star(\mathbf{x}_\sigma|\mathbf{c})\,\pi^\star(\mathbf{x}_0|\mathbf{x}_\sigma,\mathbf{c}).
\end{equation*}
Substituting and grouping terms:
\begin{align}
  \pi^{\mathrm{old}}(\mathbf{x}_0|\mathbf{x}_\sigma,\mathbf{c})
  &= \frac{Z\,\pi^+(\mathbf{x}_\sigma|\mathbf{c})}{\pi^{\mathrm{old}}(\mathbf{x}_\sigma|\mathbf{c})}\,\pi^+(\mathbf{x}_0|\mathbf{x}_\sigma,\mathbf{c})
  + \frac{(1{-}Z)\,\pi^-(\mathbf{x}_\sigma|\mathbf{c})}{\pi^{\mathrm{old}}(\mathbf{x}_\sigma|\mathbf{c})}\,\pi^-(\mathbf{x}_0|\mathbf{x}_\sigma,\mathbf{c}) \nonumber\\
  &= \alpha\,\pi^+(\mathbf{x}_0|\mathbf{x}_\sigma,\mathbf{c})
  + (1-\alpha)\,\pi^-(\mathbf{x}_0|\mathbf{x}_\sigma,\mathbf{c}),
\end{align}
The first coefficient is exactly $\alpha(\mathbf{x}_\sigma,\mathbf{c})$ as defined in Eq.~\eqref{eq:app-alpha}.
The second equals $1-\alpha$: dividing Eq.~\eqref{eq:app-marginal-mixture} by $\pi^{\mathrm{old}}(\mathbf{x}_\sigma|\mathbf{c})$ gives
$\alpha + (1{-}Z)\pi^-(\mathbf{x}_\sigma|\mathbf{c})/\pi^{\mathrm{old}}(\mathbf{x}_\sigma|\mathbf{c}) = 1$,
so the second weight is indeed $1-\alpha$.
Non-negativity of $\pi^\pm$ and $\pi^{\mathrm{old}} \geq Z\pi^+$ ensure $\alpha \in [0,1]$.
\end{proof}

\emph{Intuition.} Given a noisy observation $\mathbf{x}_\sigma$, our uncertainty about the underlying clean latent $\mathbf{x}_0$ splits between two competing hypotheses: it came from the high-reward regime ($\pi^+$, posterior weight $\alpha$) or the low-reward regime ($\pi^-$, weight $1-\alpha$). The mixing weight $\alpha(\mathbf{x}_\sigma,\mathbf{c})$ is the posterior probability that $\mathbf{x}_\sigma$ originated from a high-reward sample; it is large when $\mathbf{x}_\sigma$ is more likely under $\pi^+$ than under $\pi^{\mathrm{old}}$.

\paragraph{Step 3: Identifying the reward-aligned improvement direction.}

For $\star \in \{\mathrm{old}, +, -\}$, define the posterior mean — the expected clean latent given $\mathbf{x}_\sigma$ under distribution $\pi^\star$:
\begin{equation}
  \boldsymbol{\mu}^{\star} \equiv \boldsymbol{\mu}^{\star}(\mathbf{x}_\sigma,\mathbf{c},\sigma)
  :=
  \mathbb{E}_{\pi^\star(\mathbf{x}_0|\mathbf{x}_\sigma,\mathbf{c})}[\mathbf{x}_0],
  \qquad
  \star\in\{\mathrm{old},+,-\}.
  \label{eq:app-mu}
\end{equation}
Thus $\boldsymbol{\mu}^{\mathrm{old}}$ is the expected clean latent under the reference model, $\boldsymbol{\mu}^+$ under the high-reward subset, and $\boldsymbol{\mu}^-$ under the low-reward subset. Taking expectations of both sides of Eq.~\eqref{eq:app-posterior-mixture}:
\begin{equation}
  \boldsymbol{\mu}^{\mathrm{old}} = \alpha\,\boldsymbol{\mu}^{+} + (1-\alpha)\,\boldsymbol{\mu}^{-}.
  \label{eq:app-mean-mixture}
\end{equation}
Rearranging, the displacement from $\boldsymbol{\mu}^{\mathrm{old}}$ to $\boldsymbol{\mu}^+$ and from $\boldsymbol{\mu}^-$ to $\boldsymbol{\mu}^{\mathrm{old}}$ are parallel and proportional, defining a single \emph{improvement direction}:
\begin{equation}
  \Delta_{x_0}
  :=
  (1-\alpha)\,(\boldsymbol{\mu}^{\mathrm{old}}-\boldsymbol{\mu}^{-})
  =
  \alpha\,(\boldsymbol{\mu}^{+}-\boldsymbol{\mu}^{\mathrm{old}}).
  \label{eq:app-delta}
\end{equation}
The vector $\Delta_{x_0}$ points from the low-reward mean $\boldsymbol{\mu}^-$ toward the high-reward mean $\boldsymbol{\mu}^+$, with $\boldsymbol{\mu}^{\mathrm{old}}$ lying on the segment between them. From Eq.~\eqref{eq:app-delta} we also read off the inverse relations:
\begin{equation}
  \boldsymbol{\mu}^{+} = \boldsymbol{\mu}^{\mathrm{old}} + \frac{\Delta_{x_0}}{\alpha},
  \qquad
  \boldsymbol{\mu}^{-} = \boldsymbol{\mu}^{\mathrm{old}} - \frac{\Delta_{x_0}}{1-\alpha}.
  \label{eq:app-mu-pm}
\end{equation}

\paragraph{Step 4: The loss collapses to a squared error in the improvement direction.}

\begin{theorem}[Optimal predictor]
\label{thm:optimal-predictor}
Under unlimited data and model capacity, the unique minimizer of $\mathcal{L}(\theta)$ is the predictor that moves from the reference posterior mean by exactly $2/\beta$ steps in the improvement direction:
\begin{equation}
  \hat{\mathbf{x}}_{0,\theta^*}(\mathbf{x}_\sigma,\mathbf{c},\sigma)
  =
  \boldsymbol{\mu}^{\mathrm{old}}(\mathbf{x}_\sigma,\mathbf{c},\sigma)
  +
  \frac{2}{\beta}\,\Delta_{x_0}(\mathbf{x}_\sigma,\mathbf{c},\sigma).
  \label{eq:app-optimal}
\end{equation}
\end{theorem}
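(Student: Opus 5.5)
The plan is to reduce the minimization of $\mathcal{L}(\theta)$ to a pointwise quadratic problem in the value of the predictor at each $(\mathbf{x}_\sigma,\sigma,\mathbf{c})$. Because $c_{\mathrm{out}}(\sigma)\neq 0$ for $\sigma>0$, the map $\mathbf{m}_\theta\mapsto\hat{\mathbf{x}}_{0,\theta}$ in Eq.~\eqref{eq:app-edm-precon} is an affine bijection for fixed $(\mathbf{x}_\sigma,\sigma)$. Under unlimited capacity, optimizing over $\mathbf{m}_\theta$ is therefore the same as optimizing over an arbitrary function $\mathbf{y}(\mathbf{x}_\sigma,\sigma,\mathbf{c})=\hat{\mathbf{x}}_{0,\theta}$. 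This is exactly where the absence of $\sigma$-dependent corrections comes from.

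I will make two modelling assumptions explicit. First, the training triples are generated as $\mathbf{x}_0\sim\pi^{\mathrm{old}}(\cdot|\mathbf{c})$ and $\mathbf{x}_\sigma\sim q_\sigma(\cdot|\mathbf{x}_0)$. Second, the frozen reference is Bayes-optimal for its own distribution, i.e.\ $\hat{\mathbf{x}}_0^{\mathrm{old}}=\boldsymbol{\mu}^{\mathrm{old}}$; this also follows from unlimited capacity applied to the pre-training objective. With these in place, I condition on $(\mathbf{x}_\sigma,\sigma,\mathbf{c})$ and write the inner expectation over $\mathbf{x}_0\sim\pi^{\mathrm{old}}(\mathbf{x}_0|\mathbf{x}_\sigma,\mathbf{c})$.

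The key identities are the reward-weighted posterior moments:
\begin{equation*}
\mathbb{E}[r\mid\mathbf{x}_\sigma,\mathbf{c}]=\alpha,\qquad \mathbb{E}[r\,\mathbf{x}_0\mid\mathbf{x}_\sigma,\mathbf{c}]=\alpha\,\boldsymbol{\mu}^+,\qquad \mathbb{E}[(1-r)\,\mathbf{x}_0\mid\mathbf{x}_\sigma,\mathbf{c}]=(1-\alpha)\,\boldsymbol{\mu}^-.
\end{equation*}
To obtain them, I use Eq.~\eqref{eq:app-pi-plus} to write $r\,\pi^{\mathrm{old}}(\mathbf{x}_0|\mathbf{c})=Z\,\pi^+(\mathbf{x}_0|\mathbf{c})$. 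Multiplying by $q_\sigma$ and dividing by $\pi^{\mathrm{old}}(\mathbf{x}_\sigma|\mathbf{c})$ gives $r\,\pi^{\mathrm{old}}(\mathbf{x}_0|\mathbf{x}_\sigma,\mathbf{c})=\alpha\,\pi^+(\mathbf{x}_0|\mathbf{x}_\sigma,\mathbf{c})$. This is the same manipulation as in Lemma~\ref{lem:posterior-mixture}, and the negative part is handled symmetrically.

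Next, set $\mathbf{o}=\boldsymbol{\mu}^{\mathrm{old}}$ and $\mathbf{d}=\mathbf{y}-\mathbf{o}$. Then $\hat{\mathbf{x}}^\pm_{0,\theta}=\mathbf{o}\pm\beta\mathbf{d}$, and the conditional loss is a quadratic in $\mathbf{d}$. Its Hessian is $2\beta^2\,\mathbb{E}[r+(1-r)]\,\mathbf{I}=2\beta^2\mathbf{I}\succ 0$, so the loss is strictly convex and has a unique minimizer for $\beta>0$. Setting the gradient to zero and substituting the moments above gives
\begin{equation*}
(2\alpha-1)\,\mathbf{o}+\beta\,\mathbf{d}=\alpha\,\boldsymbol{\mu}^+-(1-\alpha)\,\boldsymbol{\mu}^-.
\end{equation*}
I then substitute Eq.~\eqref{eq:app-mu-pm}, which gives $\alpha\boldsymbol{\mu}^+=\alpha\boldsymbol{\mu}^{\mathrm{old}}+\Delta_{x_0}$ and $(1-\alpha)\boldsymbol{\mu}^-=(1-\alpha)\boldsymbol{\mu}^{\mathrm{old}}-\Delta_{x_0}$. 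The right-hand side becomes $(2\alpha-1)\boldsymbol{\mu}^{\mathrm{old}}+2\Delta_{x_0}$. The $(2\alpha-1)$ terms cancel because $\mathbf{o}=\boldsymbol{\mu}^{\mathrm{old}}$, leaving $\mathbf{d}=2\Delta_{x_0}/\beta$, which is Eq.~\eqref{eq:app-optimal}. Since the pointwise minimizer is attainable by some $\theta$ and the outer expectation is a nonnegative integral of pointwise losses, it is the unique (a.e.) global minimizer.

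The main obstacle is justifying $\hat{\mathbf{x}}_0^{\mathrm{old}}=\boldsymbol{\mu}^{\mathrm{old}}$ and identifying the correct sampling distribution of the data. Without these, the $(2\alpha-1)$ terms do not cancel and the result would carry an extra $(2\alpha-1)(\boldsymbol{\mu}^{\mathrm{old}}-\hat{\mathbf{x}}_0^{\mathrm{old}})/\beta$ offset. I would state both explicitly as the content of ``unlimited data and model capacity'' and note that $\alpha$ may be $0$ or $1$ without harm, since only the products $\alpha\boldsymbol{\mu}^+$ and $(1-\alpha)\boldsymbol{\mu}^-$ enter.
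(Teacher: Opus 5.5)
Your proposal is correct and follows the paper's proof in substance. It uses the same reweighting identity $r\,\pi^{\mathrm{old}}(\mathbf{x}_0|\mathbf{x}_\sigma,\mathbf{c})=\alpha\,\pi^{+}(\mathbf{x}_0|\mathbf{x}_\sigma,\mathbf{c})$ and its negative counterpart, the same assumption $\hat{\mathbf{x}}_0^{\mathrm{old}}=\boldsymbol{\mu}^{\mathrm{old}}$, and the same quadratic in $\mathbf{d}=\hat{\mathbf{x}}_{0,\theta}-\boldsymbol{\mu}^{\mathrm{old}}$ with curvature $\beta^2$. The only difference is that you solve the first-order condition instead of completing the square, which avoids the paper's divisions by $\alpha$ and $1-\alpha$ in the residuals $\Delta_{x_0}/\alpha$, $\Delta_{x_0}/(1-\alpha)$ and the constant $\|\Delta_{x_0}\|_2^2/(\alpha(1-\alpha))$.
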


\begin{proof}
We introduce the shorthand
\begin{equation}
  \mathbf{d} \;:=\; \hat{\mathbf{x}}_{0,\theta} - \boldsymbol{\mu}^{\mathrm{old}},
  \label{eq:app-d-def}
\end{equation}
the current model's deviation from the reference posterior mean $\boldsymbol{\mu}^{\mathrm{old}}$. Every $\theta$-dependent quantity in $\mathcal{L}(\theta)$ can be written in terms of $\mathbf{d}$; the goal is to show the loss is a pure squared error in $\mathbf{d}$ centered at the improvement direction $2\Delta_{x_0}/\beta$.
\vspace{1pt}

\textbf{Algebraic step 1: Rewrite the loss using the distributional decomposition.}
From Eq.~\eqref{eq:app-pi-plus}, $r(\mathbf{x}_0,\mathbf{c})\,\pi^{\mathrm{old}}(\mathbf{x}_0|\mathbf{c}) = Z(\mathbf{c})\,\pi^+(\mathbf{x}_0|\mathbf{c})$. Substituting into the inner expectation over $\pi^{\mathrm{old}}(\mathbf{x}_0|\mathbf{x}_\sigma,\mathbf{c})$ and applying Bayes' rule gives:
\begin{equation}
  r(\mathbf{x}_0,\mathbf{c})\,\pi^{\mathrm{old}}(\mathbf{x}_0|\mathbf{x}_\sigma,\mathbf{c})
  = \alpha(\mathbf{x}_\sigma,\mathbf{c})\,\pi^{+}(\mathbf{x}_0|\mathbf{x}_\sigma,\mathbf{c}),
  \label{eq:app-r-pi}
\end{equation}
and analogously $(1{-}r)\,\pi^{\mathrm{old}}(\mathbf{x}_0|\mathbf{x}_\sigma,\mathbf{c}) = (1{-}\alpha)\,\pi^{-}(\mathbf{x}_0|\mathbf{x}_\sigma,\mathbf{c})$. Using Eqs.~\eqref{eq:app-r-pi} to rewrite the reward-weighted inner expectation in $\mathcal{L}(\theta)$:
\begin{align}
  \mathcal{L}(\theta)
  &=
  \mathbb{E}_{\mathbf{c},\sigma,\mathbf{x}_\sigma}
  \Bigl[
    \alpha\,\mathbb{E}_{\pi^{+}(\mathbf{x}_0|\mathbf{x}_\sigma,\mathbf{c})}
    \|\hat{\mathbf{x}}_{0,\theta}^+ - \mathbf{x}_0\|_2^2 \nonumber\\
  &\hspace{6em}
    +
    (1-\alpha)\,\mathbb{E}_{\pi^{-}(\mathbf{x}_0|\mathbf{x}_\sigma,\mathbf{c})}
    \|\hat{\mathbf{x}}_{0,\theta}^- - \mathbf{x}_0\|_2^2
  \Bigr].
  \label{eq:diffusionnft-x0-theorem-step1}
\end{align}

\textbf{Algebraic step 2: Replace the inner expectation with the posterior mean.}
For any fixed vector $\hat{\mathbf{f}} \in \mathbb{R}^d$ and any distribution $p(\mathbf{y})$, the variance decomposition gives $\mathbb{E}_p\|\hat{\mathbf{f}} - \mathbf{y}\|_2^2 = \|\hat{\mathbf{f}} - \mathbb{E}_p[\mathbf{y}]\|_2^2 + \mathrm{Var}_p[\mathbf{y}]$, so the squared error is minimized over $\hat{\mathbf{f}}$ by the mean $\mathbb{E}_p[\mathbf{y}]$, with the residual variance $\mathrm{Var}_p[\mathbf{y}]$ being constant in $\hat{\mathbf{f}}$. Applying this to each term in Eq.~\eqref{eq:diffusionnft-x0-theorem-step1} — the first has prediction $\hat{\mathbf{x}}_{0,\theta}^+$ fixed with respect to $\mathbf{x}_0$, and posterior mean $\boldsymbol{\mu}^+$; the second has $\hat{\mathbf{x}}_{0,\theta}^-$ and posterior mean $\boldsymbol{\mu}^-$:
\begin{align}
  \mathcal{L}(\theta)
  &=
  \mathbb{E}_{\mathbf{c},\sigma,\mathbf{x}_\sigma}
  \Bigl[
    \alpha\|\hat{\mathbf{x}}_{0,\theta}^+ - \boldsymbol{\mu}^{+}\|_2^2
    +
    (1-\alpha)\|\hat{\mathbf{x}}_{0,\theta}^- - \boldsymbol{\mu}^{-}\|_2^2
  \Bigr]
  + C, \label{eq:diffusionnft-x0-theorem-step2}
\end{align}
where $C = \mathbb{E}[\alpha\,\mathrm{Var}_{\pi^+}[\mathbf{x}_0|\mathbf{x}_\sigma,\mathbf{c}] + (1{-}\alpha)\,\mathrm{Var}_{\pi^-}[\mathbf{x}_0|\mathbf{x}_\sigma,\mathbf{c}]\,]$
does not depend on $\theta$.

\textbf{Algebraic step 3: Expand in terms of $\mathbf{d}$.}
Step~2 showed that the MSE $\mathbb{E}\|\hat{f} - \mathbf{x}_0\|_2^2$ is minimized by the posterior mean. Applying this to the reference model's own pre-training loss (i.e.\ it was trained to minimize MSE against samples from $\pi^{\mathrm{old}}$), an unlimited-capacity reference model converges to the posterior mean $\boldsymbol{\mu}^{\mathrm{old}}$, so $\hat{\mathbf{x}}_0^{\mathrm{old}} = \boldsymbol{\mu}^{\mathrm{old}}$.
Meanwhile, the current model satisfies $\hat{\mathbf{x}}_{0,\theta} = \boldsymbol{\mu}^{\mathrm{old}} + \mathbf{d}$ by definition of $\mathbf{d}$ (Eq.~\eqref{eq:app-d-def}). Substituting both into Eqs.~\eqref{eq:app-pos-branch}--\eqref{eq:app-neg-branch} and subtracting $\boldsymbol{\mu}^\pm$ via Eq.~\eqref{eq:app-mu-pm}:
\begin{align}
  \hat{\mathbf{x}}_{0,\theta}^+ - \boldsymbol{\mu}^+
  &= \bigl[(1{-}\beta)\boldsymbol{\mu}^{\mathrm{old}} + \beta(\boldsymbol{\mu}^{\mathrm{old}} {+} \mathbf{d})\bigr]
    - \bigl[\boldsymbol{\mu}^{\mathrm{old}} + \tfrac{\Delta_{x_0}}{\alpha}\bigr]
  = \beta\mathbf{d} - \frac{\Delta_{x_0}}{\alpha},
  \label{eq:app-resid-plus}\\
  \hat{\mathbf{x}}_{0,\theta}^- - \boldsymbol{\mu}^-
  &= \bigl[(1{+}\beta)\boldsymbol{\mu}^{\mathrm{old}} - \beta(\boldsymbol{\mu}^{\mathrm{old}} {+} \mathbf{d})\bigr]
    - \bigl[\boldsymbol{\mu}^{\mathrm{old}} - \tfrac{\Delta_{x_0}}{1-\alpha}\bigr]
  = -\beta\mathbf{d} + \frac{\Delta_{x_0}}{1-\alpha}.
  \label{eq:app-resid-minus}
\end{align}
Substituting Eqs.~\eqref{eq:app-resid-plus}--\eqref{eq:app-resid-minus} into Eq.~\eqref{eq:diffusionnft-x0-theorem-step2} and expanding:
\begin{align}
  &\alpha\|\hat{\mathbf{x}}_{0,\theta}^+ - \boldsymbol{\mu}^+\|_2^2
  + (1-\alpha)\|\hat{\mathbf{x}}_{0,\theta}^- - \boldsymbol{\mu}^-\|_2^2 \nonumber\\
  &=
  \alpha\!\left[\beta^2\|\mathbf{d}\|_2^2 - \tfrac{2\beta}{\alpha}\mathbf{d}^\top\Delta_{x_0} + \tfrac{\|\Delta_{x_0}\|_2^2}{\alpha^2}\right]
  + (1{-}\alpha)\!\left[\beta^2\|\mathbf{d}\|_2^2 - \tfrac{2\beta}{1-\alpha}\mathbf{d}^\top\Delta_{x_0} + \tfrac{\|\Delta_{x_0}\|_2^2}{(1-\alpha)^2}\right] \nonumber\\
  &= \beta^2\|\mathbf{d}\|_2^2 - 4\beta\,\mathbf{d}^\top\Delta_{x_0} + \frac{\|\Delta_{x_0}\|_2^2}{\alpha(1-\alpha)},
  \label{eq:diffusionnft-x0-theorem-step3}
\end{align}
collecting: the $\|\mathbf{d}\|_2^2$ terms give $(\alpha{+}1{-}\alpha)\beta^2\|\mathbf{d}\|_2^2 = \beta^2\|\mathbf{d}\|_2^2$; the cross-terms give $({-}2\beta{-}2\beta)\mathbf{d}^\top\Delta_{x_0}$; the $\|\Delta_{x_0}\|_2^2$ terms give $1/\alpha + 1/(1{-}\alpha) = 1/(\alpha(1{-}\alpha))$.

\textbf{Algebraic step 4: Complete the square in $\mathbf{d}$.}
\begin{equation}
  \beta^2\|\mathbf{d}\|_2^2 - 4\beta\,\mathbf{d}^\top\Delta_{x_0}
  =
  \beta^2\!\left\|\mathbf{d} - \frac{2\Delta_{x_0}}{\beta}\right\|_2^2 - 4\|\Delta_{x_0}\|_2^2.
\end{equation}
Substituting into Eq.~\eqref{eq:diffusionnft-x0-theorem-step3} and then into Eq.~\eqref{eq:diffusionnft-x0-theorem-step2}:
\begin{equation}
  \mathcal{L}(\theta)
  =
  \beta^2\,\mathbb{E}_{\mathbf{c},\sigma,\mathbf{x}_\sigma}
  \left\|
    \hat{\mathbf{x}}_{0,\theta}
    -
    \left(\boldsymbol{\mu}^{\mathrm{old}}+\frac{2}{\beta}\Delta_{x_0}\right)
  \right\|_2^2
  + C',
\end{equation}
where $C' = C + \mathbb{E}[\|\Delta_{x_0}\|_2^2/(\alpha(1{-}\alpha)) - 4\|\Delta_{x_0}\|_2^2]$ is independent of $\theta$. Since $\beta^2 > 0$, the unique minimizer sets $\hat{\mathbf{x}}_{0,\theta} = \boldsymbol{\mu}^{\mathrm{old}} + (2/\beta)\Delta_{x_0}$ pointwise, giving Eq.~\eqref{eq:app-optimal}.
\end{proof}

The optimal model is the reference posterior mean $\boldsymbol{\mu}^{\mathrm{old}}$ shifted by $(2/\beta)$ steps in the reward-aligned direction $\Delta_{x_0}$, which by Eq.~\eqref{eq:app-delta} points from $\boldsymbol{\mu}^-$ (low-reward samples) toward $\boldsymbol{\mu}^+$ (high-reward samples). The hyperparameter $\beta$ controls how far the model is shifted: larger $\beta$ produces a stronger per-step signal but also moves the branches farther from the reference, potentially destabilising training.

\paragraph{Connection to other parameterisations.}
For Gaussian noising schedules, velocity predictors $\mathbf{v}_\theta$ used by flow-matching models are related to $x_0$ predictors by a per-noise-level affine transformation that does not depend on $\theta$. Consequently, differences between any two predictors in $x_0$ space are proportional to their differences in velocity space. The improvement direction $\Delta_{x_0}$ and the result of Theorem~\ref{thm:optimal-predictor} therefore translate directly to velocity-prediction parameterisations up to a $\sigma$-dependent scalar, confirming that the theoretical guarantees are parameterisation-agnostic.

\section{Ablation Studies}
\label{app:ablations}

We report ablation results for the key design choices of our post-training. Each table isolates one factor while keeping all other hyperparameters fixed to the default configuration used in the main experiments. Metrics are evaluated on the DROID validation split over $10$-step autoregressive rollouts; \textbf{higher} SSIM and PSNR and \textbf{lower} LPIPS indicate better visual quality. Specifically, we report the performance after the \textbf{same number of gradient updates (training steps) have been performed} on the model. Notably, this is different from other RL literature which report performance after the same number of outer iterations (ignoring the number of gradient updates performed during the inner iterations). The ablations collectively justify design choices made in the main paper.


\begin{table}[ht]
\centering
\caption{Ablation of the post-training learning rate.}
\label{tab:abl-lr}
\vspace{-6pt}
\begin{tabular}{@{}l ccc ccc@{}}
\toprule
\multirow{2}{*}{Learning Rate} &
  \multicolumn{3}{c}{External} & \multicolumn{3}{c}{Wrist} \\
\cmidrule(lr){2-4}\cmidrule(lr){5-7}
& SSIM$\uparrow$ & PSNR$\uparrow$ & LPIPS$\downarrow$ &
  SSIM$\uparrow$ & PSNR$\uparrow$ & LPIPS$\downarrow$ \\
\midrule
$3\times10^{-4}$ & $\underline{0.865}$ & $\underline{25.02}$ & $\underline{0.1230}$ & $\underline{0.706}$ & $\underline{19.95}$ & $\underline{0.3603}$ \\
$1\times10^{-4}$ \textbf{(Ours)} & $\mathbf{0.872}$ & $\mathbf{25.52}$ & $\mathbf{0.1162}$ & $\mathbf{0.721}$ & $\mathbf{20.40}$ & $\mathbf{0.3387}$ \\
\bottomrule
\end{tabular}
\vspace{4pt}
\end{table}


\begin{table}[ht]
\centering
\caption{Ablation of the reward signal used during DiffusionNFT post-training.}
\label{tab:abl-reward}
\vspace{-6pt}
\begin{tabular}{@{}l ccc ccc@{}}
\toprule
\multirow{2}{*}{Reward} &
  \multicolumn{3}{c}{External} & \multicolumn{3}{c}{Wrist} \\
\cmidrule(lr){2-4}\cmidrule(lr){5-7}
& SSIM$\uparrow$ & PSNR$\uparrow$ & LPIPS$\downarrow$ &
  SSIM$\uparrow$ & PSNR$\uparrow$ & LPIPS$\downarrow$ \\
\midrule
LPIPS only & $0.868$ & $25.17$ & $\mathbf{0.1156}$ & $0.708$ & $19.90$ & $\mathbf{0.3242}$ \\
SSIM only  & $\mathbf{0.872}$ & $25.40$ & $0.1178$ & $\mathbf{0.724}$ & $20.24$ & $0.3516$ \\
PSNR only  & $\underline{0.871}$ & $\mathbf{25.57}$ & $0.1189$ & $0.716$ & $\mathbf{20.45}$ & $0.3544$ \\
Combined \textbf{(Ours)}       & $\mathbf{0.872}$ & $\underline{25.52}$ & $\underline{0.1162}$ & $\underline{0.721}$ & $\underline{20.40}$ & $\underline{0.3387}$ \\
\bottomrule
\end{tabular}
\vspace{4pt}
\end{table}


\begin{table}[ht]
\centering
\caption{Ablation of the context-window curriculum. \emph{Curriculum} is the growing schedule from~\protect\cite{wang2026worldcompass}; \emph{Random} samples uniformly from the specified range each step; \emph{Fixed} keeps a constant window size throughout training.}
\label{tab:abl-curriculum}
\vspace{-6pt}
\begin{tabular}{@{}l ccc ccc@{}}
\toprule
\multirow{2}{*}{Strategy} &
  \multicolumn{3}{c}{External} & \multicolumn{3}{c}{Wrist} \\
\cmidrule(lr){2-4}\cmidrule(lr){5-7}
& SSIM$\uparrow$ & PSNR$\uparrow$ & LPIPS$\downarrow$ &
  SSIM$\uparrow$ & PSNR$\uparrow$ & LPIPS$\downarrow$ \\
\midrule
Fixed                        &    &    &    &    &    &    \\
\quad Size 3                 & $\underline{0.872}$ & $\mathbf{25.53}$ & $0.1170$ & $\mathbf{0.723}$ & $\mathbf{20.51}$ & $\mathbf{0.3381}$ \\
\quad Size 6                 & $0.871$ & $25.41$ & $0.1174$ & $0.719$ & $20.33$ & $0.3423$ \\
Curriculum~\cite{wang2026worldcompass}                   & $0.871$ & $25.50$ & $0.1169$ & $0.721$ & $20.43$ & $0.3412$ \\
Random                       &    &    &    &    &    &    \\
\quad Size 0--9 \textbf{(Ours)}             & $\underline{0.872}$ & $\underline{25.52}$ & $\underline{0.1162}$ & $0.721$ & $20.40$ & $\underline{0.3387}$ \\
\quad Size 0--4              & $\mathbf{0.873}$ & $\underline{25.52}$ & $\mathbf{0.1161}$ & $\underline{0.722}$ & $\underline{20.44}$ & $0.3391$ \\
\bottomrule
\end{tabular}
\vspace{4pt}
\end{table}


\begin{table}[ht]
\centering
\caption{Ablation of the GRPO group size and the number of candidates retained via best-of-$N$ filtering (evaluated at group size~16).}
\label{tab:abl-group}
\vspace{-6pt}
\begin{tabular}{@{}l ccc ccc@{}}
\toprule
\multirow{2}{*}{Config} &
  \multicolumn{3}{c}{External} & \multicolumn{3}{c}{Wrist} \\
\cmidrule(lr){2-4}\cmidrule(lr){5-7}
& SSIM$\uparrow$ & PSNR$\uparrow$ & LPIPS$\downarrow$ &
  SSIM$\uparrow$ & PSNR$\uparrow$ & LPIPS$\downarrow$ \\
\midrule
Group Size 4              & $\underline{0.875}$ & $25.62$ & $0.1142$ & $0.724$ & $20.56$ & $0.3379$ \\
Group Size 8              & $0.873$ & $25.61$ & $0.1176$ & $0.721$ & $20.45$ & $0.3471$ \\
Group Size 16             &    &    &    &    &    &    \\
\quad No best of $N$      & $0.874$ & $\underline{25.69}$ & $\underline{0.1127}$ & $\underline{0.725}$ & $\underline{20.64}$ & $\underline{0.3255}$ \\
\quad Best of 5   \textbf{(Ours)}        & $0.872$ & $25.52$ & $0.1162$ & $0.721$ & $20.40$ & $0.3387$ \\
\quad Best of 2           & $\mathbf{0.876}$ & $\mathbf{25.79}$ & $\mathbf{0.1107}$ & $\mathbf{0.727}$ & $\mathbf{20.69}$ & $\mathbf{0.3212}$ \\
\bottomrule
\end{tabular}
\vspace{4pt}
\end{table}


\begin{table}[ht]
\centering
\caption{Ablation of the future prediction horizon $H$ (number of frames generated per autoregressive step) used during post-training.}
\label{tab:abl-horizon}
\vspace{-6pt}
\begin{tabular}{@{}l ccc ccc@{}}
\toprule
\multirow{2}{*}{Horizon $H$} &
  \multicolumn{3}{c}{External} & \multicolumn{3}{c}{Wrist} \\
\cmidrule(lr){2-4}\cmidrule(lr){5-7}
& SSIM$\uparrow$ & PSNR$\uparrow$ & LPIPS$\downarrow$ &
  SSIM$\uparrow$ & PSNR$\uparrow$ & LPIPS$\downarrow$ \\
\midrule
$H=3$ & $\mathbf{0.872}$ & $\underline{25.41}$ & $\underline{0.1169}$ & $\underline{0.719}$ & $\underline{20.31}$ & $\underline{0.3424}$ \\
$H=1$ \textbf{(Ours)} & $\mathbf{0.872}$ & $\mathbf{25.52}$ & $\mathbf{0.1162}$ & $\mathbf{0.721}$ & $\mathbf{20.40}$ & $\mathbf{0.3387}$ \\
\bottomrule
\end{tabular}
\vspace{4pt}
\end{table}


\begin{table}[ht]
\centering
\caption{Ablation of the wrist-camera loss weight $w_{\mathrm{wrist}}$ relative to the external cameras (weight~1).}
\label{tab:abl-view}
\vspace{-6pt}
\begin{tabular}{@{}l ccc ccc@{}}
\toprule
\multirow{2}{*}{$w_{\mathrm{wrist}}$} &
  \multicolumn{3}{c}{External} & \multicolumn{3}{c}{Wrist} \\
\cmidrule(lr){2-4}\cmidrule(lr){5-7}
& SSIM$\uparrow$ & PSNR$\uparrow$ & LPIPS$\downarrow$ &
  SSIM$\uparrow$ & PSNR$\uparrow$ & LPIPS$\downarrow$ \\
\midrule
$w_{\mathrm{wrist}}=2$ & $\mathbf{0.872}$ & $\underline{25.45}$ & $\underline{0.1169}$ & $\mathbf{0.722}$ & $\mathbf{20.47}$ & $\mathbf{0.3357}$ \\
$w_{\mathrm{wrist}}=1$ \textbf{(Ours)} & $\mathbf{0.872}$ & $\mathbf{25.52}$ & $\mathbf{0.1162}$ & $\underline{0.721}$ & $\underline{20.40}$ & $\underline{0.3387}$ \\
\bottomrule
\end{tabular}
\vspace{4pt}
\end{table}


\begin{table}[ht]
\centering
\caption{Effect of KL regularisation applied to the post-training objective.}
\label{tab:abl-kl}
\vspace{-6pt}
\begin{tabular}{@{}l ccc ccc@{}}
\toprule
\multirow{2}{*}{KL Reg.} &
  \multicolumn{3}{c}{External} & \multicolumn{3}{c}{Wrist} \\
\cmidrule(lr){2-4}\cmidrule(lr){5-7}
& SSIM$\uparrow$ & PSNR$\uparrow$ & LPIPS$\downarrow$ &
  SSIM$\uparrow$ & PSNR$\uparrow$ & LPIPS$\downarrow$ \\
\midrule
Without & $\mathbf{0.873}$ & $\underline{25.50}$ & $\underline{0.1163}$ & $\underline{0.720}$ & $\underline{20.38}$ & $\underline{0.3392}$ \\
With \textbf{(Ours)}   & $\underline{0.872}$ & $\mathbf{25.52}$ & $\mathbf{0.1162}$ & $\mathbf{0.721}$ & $\mathbf{20.40}$ & $\mathbf{0.3387}$ \\
\bottomrule
\end{tabular}
\vspace{4pt}
\end{table}


\begin{table}[ht]
\centering
\caption{Effect of applying a warming EMA schedule to the frozen reference (old) policy, which gradually interpolates it toward the learning policy during training.}
\label{tab:abl-ema-old}
\vspace{-6pt}
\begin{tabular}{@{}l ccc ccc@{}}
\toprule
\multirow{2}{*}{Reference Policy EMA} &
  \multicolumn{3}{c}{External} & \multicolumn{3}{c}{Wrist} \\
\cmidrule(lr){2-4}\cmidrule(lr){5-7}
& SSIM$\uparrow$ & PSNR$\uparrow$ & LPIPS$\downarrow$ &
  SSIM$\uparrow$ & PSNR$\uparrow$ & LPIPS$\downarrow$ \\
\midrule
No EMA schedule        & $\mathbf{0.872}$ & $\underline{25.51}$ & $\mathbf{0.1162}$ & $\mathbf{0.722}$ & $\mathbf{20.46}$ & $\underline{0.3390}$ \\
EMA rising to $0.5$ \textbf{(Ours)}   & $\mathbf{0.872}$ & $\mathbf{25.52}$ & $\mathbf{0.1162}$ & $\underline{0.721}$ & $\underline{20.40}$ & $\mathbf{0.3387}$ \\
\bottomrule
\end{tabular}
\vspace{4pt}
\end{table}


\begin{table}[h!]
\centering
\caption{Effect of applying an exponential moving average (EMA) to the learning (fine-tuned) policy weights during post-training.}
\vspace{-6pt}
\label{tab:abl-ema-learn}
\begin{tabular}{@{}l ccc ccc@{}}
\toprule
\multirow{2}{*}{Policy EMA} &
  \multicolumn{3}{c}{External} & \multicolumn{3}{c}{Wrist} \\
\cmidrule(lr){2-4}\cmidrule(lr){5-7}
& SSIM$\uparrow$ & PSNR$\uparrow$ & LPIPS$\downarrow$ &
  SSIM$\uparrow$ & PSNR$\uparrow$ & LPIPS$\downarrow$ \\
\midrule
EMA $= 0.9$ & $0.876$ & $25.81$ & $0.1105$ & $0.726$ & $20.68$ & $0.3215$ \\
No EMA  (Group Size 16, BoN 2)  & $0.876$ & $25.79$ & $0.1107$ & $0.727$ & $20.69$ & $0.3212$ \\
\bottomrule
\end{tabular}
\vspace{4pt}
\end{table}


\paragraphcustom{Learning rate.}
The choice of $\eta{=}1{\times}10^{-4}$ is the single most consequential hyperparameter: the higher rate $3{\times}10^{-4}$ degrades external PSNR by $0.50$\,dB and LPIPS by $0.007$, confirming that conservative fine-tuning is essential to avoid destabilizing the pretrained backbone (see Table~\ref{tab:abl-lr}).

\paragraphcustom{Combined reward.}
Using all three perceptual signals jointly (SSIM, PSNR, LPIPS) is the only configuration that is competitive across all six reported metrics.
Each single-metric variant wins on its own axis but regresses elsewhere — optimizing for LPIPS alone, for example, produces the best LPIPS yet the lowest SSIM among all reward configurations.
The combined reward therefore functions as a necessary regularizer, preventing the model from collapsing to a solution that sacrifices overall image quality for any single perceptual dimension (see Table~\ref{tab:abl-reward}).

\paragraphcustom{Prefix sampling strategy.}
Uniform random sampling over the full prefix range ($P\!\sim\!\mathrm{Unif}\{0,\ldots,9\}$) matches or exceeds every fixed-window baseline and, crucially, outperforms the growing curriculum introduced by~\cite{wang2026worldcompass} on five of six metrics.
This is a meaningful finding: the added complexity of a structured curriculum schedule does not translate into improved visual quality in our setting; a simple uniform draw over the full rollout-depth spectrum is both sufficient and preferable.
The marginal additional gain from restricting the range to $\{0,\ldots,4\}$ is below $0.001$ SSIM and $0.01$\,dB PSNR, confirming that exposing training to deeper rollout positions (higher $P$) does not hurt, while better covering the error regimes encountered at inference (see Table~\ref{tab:abl-curriculum}).

\paragraphcustom{Group size and best-of-$N$ filtering.}
We had found that best-of-$2$ yields further metric gains on every axis (e.g., $+0.27$\,dB external PSNR, $-0.006$ external LPIPS, $-0.018$ wrist LPIPS), suggesting that more aggressive output filtering is a straightforward avenue for future improvement — achievable without any change to the training objective or model architecture. However, for our large-scale training we stuck to a more moderate best-of-$5$ instead of the extremes to balance efficiency and prioritising stable and consistent reward signal across training batches (see Table~\ref{tab:abl-group}).

\paragraphcustom{Prediction horizon.}
$H{=}1$ consistently outperforms $H{=}3$ across both camera views.
Single-step post-training provides a tighter, lower-variance gradient signal that more directly targets the per-step generation quality evaluated at test time; multi-step rollout objectives introduce compounding errors that appear to add noise to the update without providing a commensurate benefit (see Table~\ref{tab:abl-horizon}).

\paragraphcustom{View-Specific Weighting.} 
The view weighting performed as expected -- assigning higher weight to the wrist view resulted in a post-trained model that better generates the wrist view camera. However, the practical improvement is not significant compared to case of $w_{\text{wrist}}=1$, and resulted in considerable decrease in the performance of the external cameras. Therefore, we stuck to the standard setting of $w_{\text{wrist}}=1$ (see Table~\ref{tab:abl-view}).

\paragraphcustom{KL regularization.}
Adding KL regularization improves five of six metrics, with the most consistent gains on the wrist camera.
The regularization term prevents the fine-tuned policy from drifting too far from the pretrained reference, acting as an implicit constraint that preserves the model's generalization while allowing reward-directed improvement (see Table~\ref{tab:abl-kl}).

\paragraphcustom{Old-policy EMA schedule.}
The EMA schedule of the old policy rising to $0.5$ is marginally better than a copy reference across most metrics. The EMA value rises to $0.5$ until $500$ training steps, and then remains fixed at this value.
Although the absolute differences are within measurement noise, the schedule provides a principled mechanism to prevent the on-policy sampling model from drifting too far from the original model and introducing instability early on in the training (see Table~\ref{tab:abl-ema-old}).

\paragraphcustom{EMA on the learning policy.}
Similar to~\cite{zheng2025diffusionnft}, we test with using a EMA of the weights of the policy being learned. However, we found that this does not lead to any significant improvements for the additional machinery (see Table~\ref{tab:abl-ema-learn}).  

\section{World Model For Policy Evaluation}
\label{app:wm-policy-eval}

We test whether world model can be used for policy evaluation by rolling out the learned policy in the world model and calculating the task progression rates for the different policies.

\begin{figure}[tb]
  \centering
  \begin{subfigure}{0.95\linewidth}
  \includegraphics[width=\textwidth]{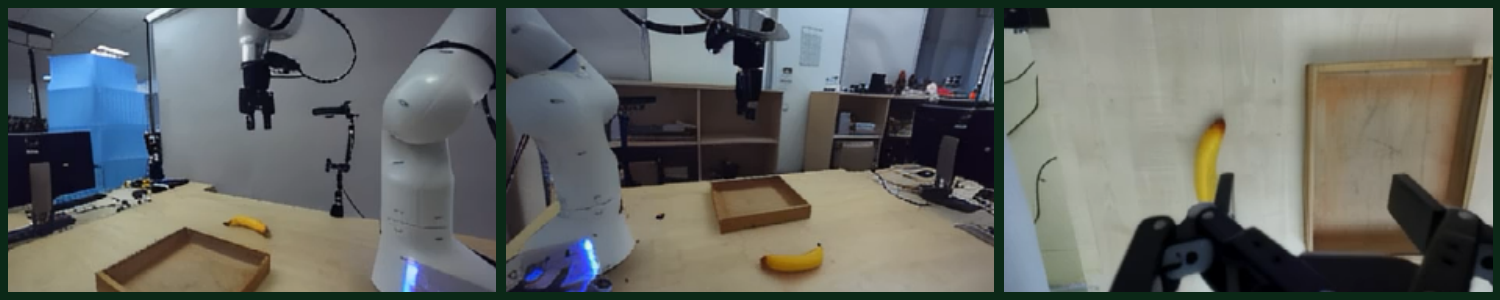}
  \caption{\textbf{Put Banana in Box}}
  \label{fig:sim2real-images-a}
  \end{subfigure}
  \hfill
  \begin{subfigure}{0.95\linewidth}
    \includegraphics[width=\textwidth]{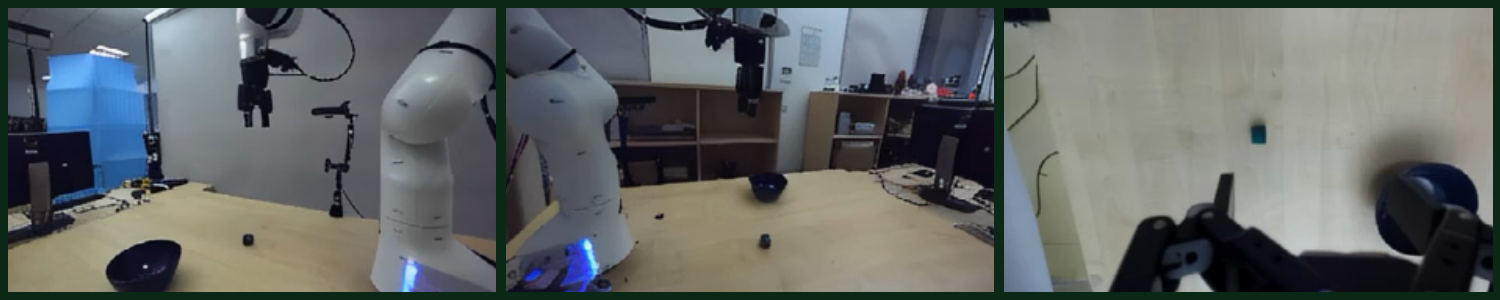}
    \caption{\textbf{Put Green Block in Bowl}}
    \label{fig:sim2real-images-b}
  \end{subfigure}
  \hfill
  \begin{subfigure}{0.95\linewidth}
    \includegraphics[width=\textwidth]{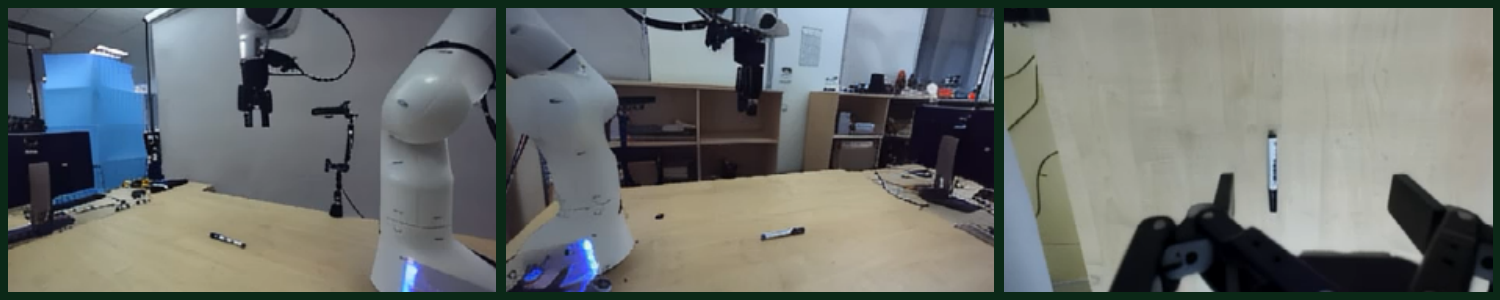}
    \caption{\textbf{Rotate Marker}}
    \label{fig:sim2real-images-c}
  \end{subfigure}
  \caption{
      \textbf{Initial Conditions for the WM-to-Real Correlation.} We run rollouts in the world models given these starting conditions for the three tasks. For the policy rollout, the middle view (external camera) and the wrist view are provided as inputs. 
  }
  \label{fig:sim2real-images}
\end{figure}

\noindent We perform a set of $3$ tasks:
\begin{itemize}
  \item \textbf{Put Banana in Box:} In this task, the robot must pick up a banana and place it inside a box. The task is successful if the banana is fully contained within the box (see Fig.~\ref{fig:sim2real-images-a}).
  \item \textbf{Put Green Block in Bowl:} In this task, the robot must pick up a green block and place it inside a bowl. The task is successful if the green block is fully contained within the bowl (see Fig.~\ref{fig:sim2real-images-b}).
  \item \textbf{Rotate Marker:} In this task, the robot must pick up a marker and rotate it by at least $45$ degrees. The task is successful if the marker is rotated by the required amount (see Fig.~\ref{fig:sim2real-images-c}).
\end{itemize}

\begin{table}[h!]
\centering
\caption{\textbf{Partial Progress of the Different Tasks}.}
\vspace{-6pt}
\label{tab:sim2real-partial-progress}
\begin{tabular}{@{}c@{\hspace{5pt}}l@{\hspace{5pt}}l@{}}
\toprule
Skill & Task & Progression\\
\midrule
\multirow{2}{*}{\emph{Put}} & Put Banana in Box & \multirow{2}{*}{Reach $\rightarrow$ Grasp $\rightarrow$ Lift $\rightarrow$ Move Close $\rightarrow$ IsInside} \\
& Put Green Block in Bowl &  \\
\midrule
\emph{Rotate} & Rotate Marker & Reach $\rightarrow$ Grasp $\rightarrow$ Rotate $45^\circ$ \\
\bottomrule
\end{tabular}
\vspace{4pt}
\end{table}

For each task, we collect real-rollouts for three policies: $\pi_0$~\cite{black2024pi_0}, $\pi_0$-FAST~\cite{pertsch2025fast} and GROOT N$1.5$~\cite{bjorck2025gr00t}. For each task and policy we collect $5$ real rollouts, and $11$ simulated rollouts in the world model. For each rollout, we record the partial progress according to Table~\ref{tab:sim2real-partial-progress}. The total progress is divided into $N$ steps and completing a particular step in order would grant $1/N$ towards the total progress. We then average the partial progress over the multiple rollouts to obtain an estimate of the policy's performance on the particular task.    

We report the Pearson Correlation $r$ and MMRV~\cite{simpler}. Higher Pearson correlation implies that the world model more closely follows the trend in real progress rates amongst the policies. MMRV, on the other hand, penalizes policy rank violations between the simulated setup (world model) and the real setup. Our world model produces higher correlation and lower MMRV than baseline~\cite{ctrlworld}.

\begin{figure}[htp]
  \centering
  \includegraphics[width=\textwidth]{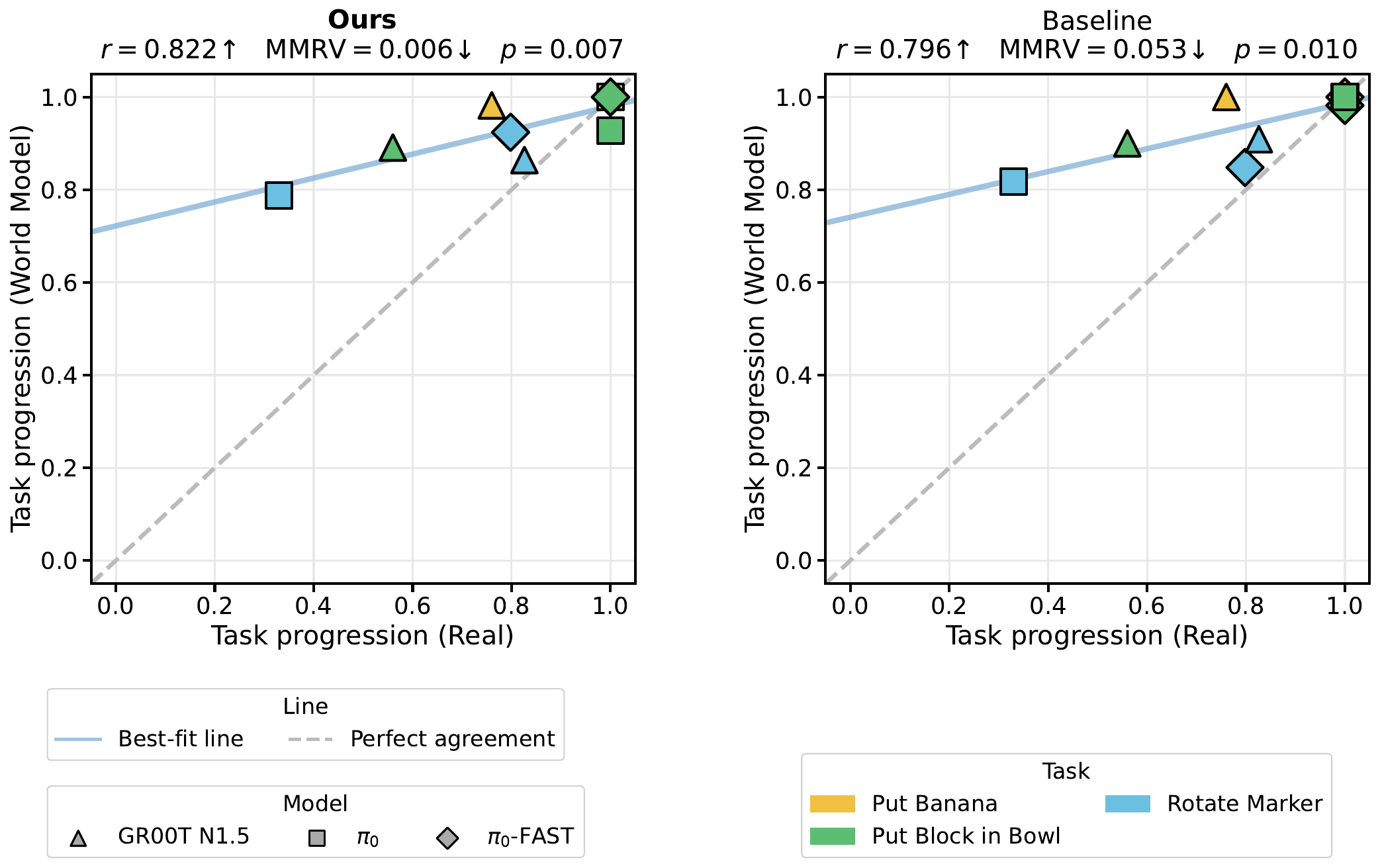}
  \caption{\small \textbf{WM to Real comparison:} While both the world models tend to make the task easier for the policies (shown by the higher values task performance on the WM), our post-trained model maintains better correlation and MMRV amongst the policies compared to the baseline~\protect\cite{ctrlworld}.}
  \label{fig:sim-real-comparison}
\end{figure}

\clearpage


\section{Human Preference Study Details}
\label{app:human-preference-study-details}

\paragraphcustom{Study design.} We conduct a human preference study to evaluate the visual quality of the rollouts generated by the original and post-trained Ctrl-World models. The study was conducted in-house on our custom-created platform. The study comprised $8$ users who are experts in computer vision, machine learning and robotics with papers in these fields. Each user also additionally had a masters qualification, with some users also having a PhD qualification. We believe that the users were sufficiently capable of discerning differences and alignment to the reference rollout video. To provide high quality signal, users were also allowed to choose which video they would like to rate. We find an inter-rater kappa $\kappa ~\sim 0.4$, which points to moderate agreement on rating. However, we find that the binomial test gives us a $p$-value of $p=3.5\cdot10^{-20}$, which means essentially zero chance that our post-trained model could have won purely by random chance. The $95\%$ CI comes out to be $[72\%,100\%]$ which is significantly higher than $50\%$.

\paragraphcustom{UI Design.}
Each user first went through an onboarding phase, where the task description and potential hints for rating the videos were provided (see, Fig.~\ref{fig:ui-onboarding}). 

On the comparison webpage, the users were shown the reference video and two generated videos (one from each checkpoint). Each video was generated using $14$ autoregressive interaction steps (approx. $11$ seconds). Only the reference video was labeled and the other videos' labels were masked. The users had the ability to scrub through the frames to view them individually or watch the entire video. Below the videos, users were provided two options ``A'' or ``B'', and were asked to choose between one of them (see, Fig.~\ref{fig:ui-comparison}). The position of the generated videos were always randomized so as to remove any position bias.

\begin{figure}[htp]
    \centering
    \includegraphics[height=0.85\textheight]{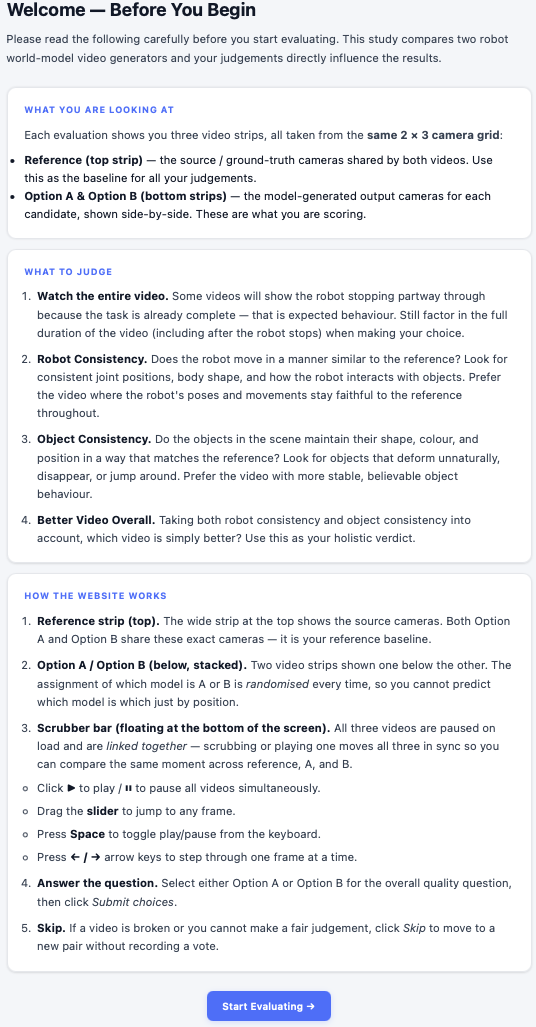}
    \caption{\small \textbf{2AFC Website Onboarding:} We provide a small description of the task and potential hints to look at when judging the quality of the generated rollouts compared to the reference.}
    \label{fig:ui-onboarding}
\end{figure}

\begin{figure}[htp]
    \centering
    \includegraphics[width=\linewidth]{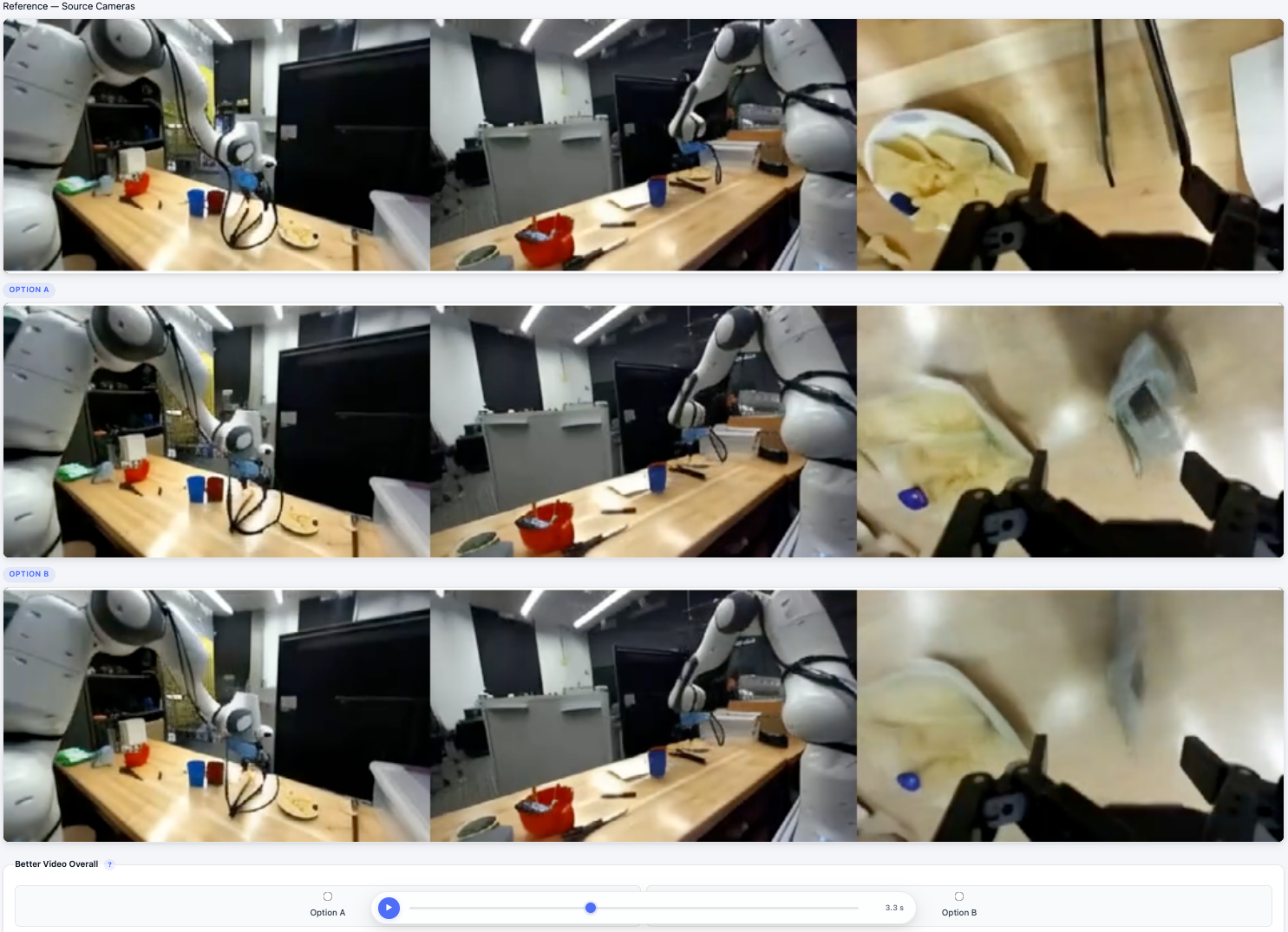}
    \caption{\small \textbf{2AFC Website Comparison UI:} On the comparison webpage, the users are provided with the reference video on the top and the two generated videos below it. The users can use the scrubber to move through the frames or watch the entire video. After which, the user needs to choose between Option A or Option B.}
    \label{fig:ui-comparison}
\end{figure}

\paragraphcustom{ELO Score Design.} 
We adopt a chess-style ELO rating system~\cite{elo1978rating} to rank models from pairwise preference votes.
Each model is initialised with a rating of $800$.
For every vote submitted by a user, the two models shown in that comparison are treated as opponents in a single ELO match.
Prior to updating their ratings, the system computes each model's \emph{expected score} via the standard logistic function,
\begin{equation}
  E_i = \frac{1}{1 + 10^{(R_j - R_i)/400}},
  \label{eq:elo-expected}
\end{equation}
where $R_i$ and $R_j$ are the current ratings of model $i$ and its opponent $j$, respectively.
$E_i \in (0,1)$ maps the rating gap to a win probability: a model rated $400$ points above its opponent is expected to win $\approx 91\%$ of the time.
The \emph{actual score} is $S_i = 1$ for the preferred model and $S_i = 0$ for the other.
Ratings are then updated symmetrically,
\begin{equation}
  R_i \;\leftarrow\; R_i + K\,(S_i - E_i),
  \label{eq:elo-update}
\end{equation}
with a fixed gain factor $K = 32$.
Because $S_i - E_i$ is small when the outcome matches the prior expectation, a model that was already the clear favourite gains little from an expected victory; conversely, an upset win produces a large positive update.
In practice, with only two models, ELO converges quickly to reflect the empirical preference rate — a model preferred 70\% of the time will accumulate a rating roughly $100$-$150$ points above the other.

\section{Additional Details}
\label{app:algorithm}

\subsection{Inference Details}
We run inference using the Euler Sampler, and run the sampling for $50$ steps. Following~\cite{zheng2025diffusionnft}, we do not use CFG while sampling the generated video. 

\subsection{Additional Results}
\label{app:additional-results}

\paragraphcustom{Degradation of the metrics over rollout length for external views}
We present in Fig.~\ref{fig:external-cam-temporal-evolution} the temporal evolution of the all the metrics for the external camera view. We find that the post-trained model has a better performance on all frames. This suggests that the post-training procedure is effective at improving the long-term consistency of the generated videos.

\begin{figure}[tb]
  \centering
  \begin{subfigure}{0.32\linewidth}
  \includegraphics[width=\textwidth]{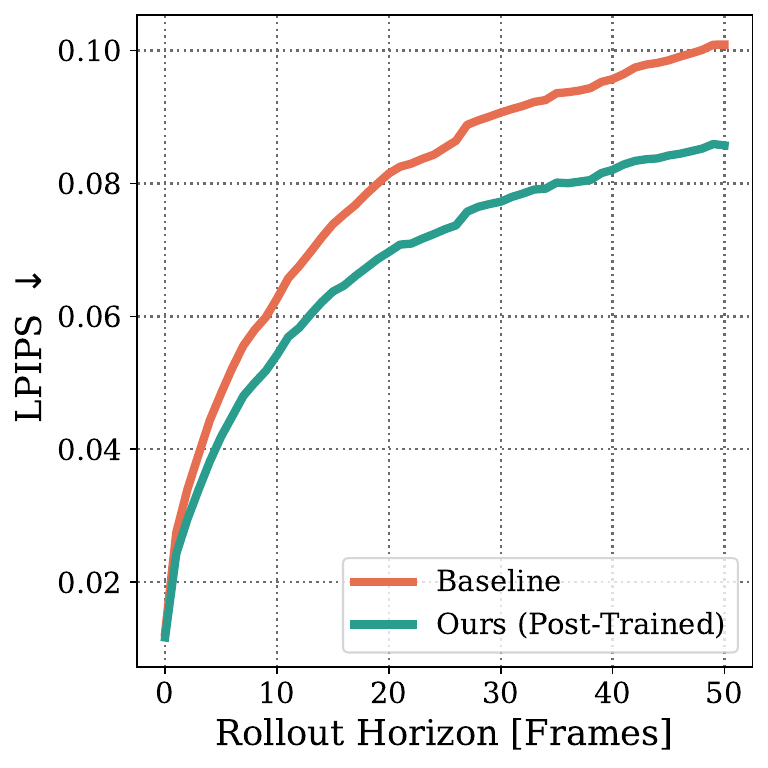}
  \end{subfigure}
  \hfill
  \begin{subfigure}{0.32\linewidth}
    \includegraphics[width=\textwidth]{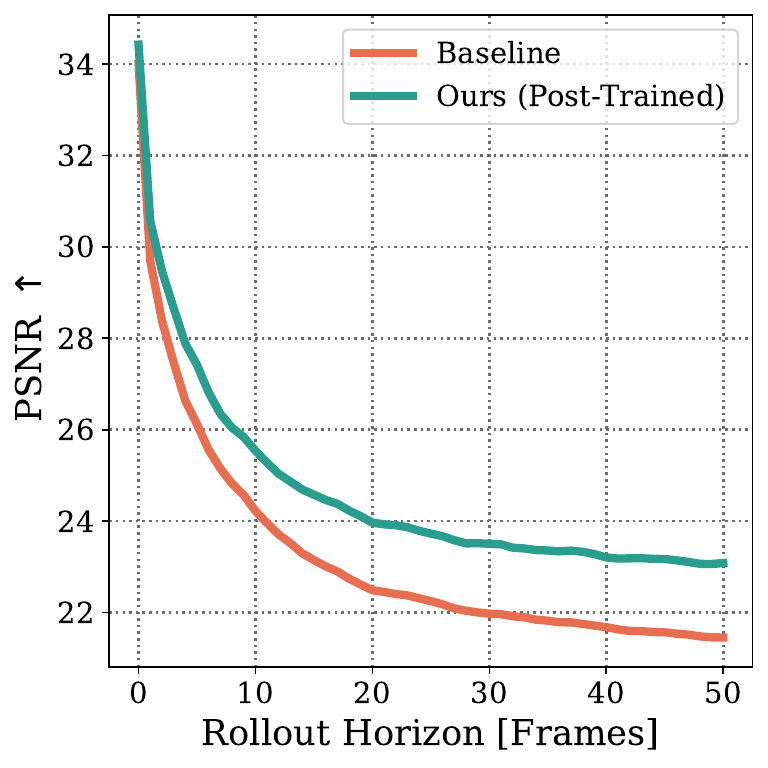}
  \end{subfigure}
  \hfill
  \begin{subfigure}{0.32\linewidth}
    \includegraphics[width=\textwidth]{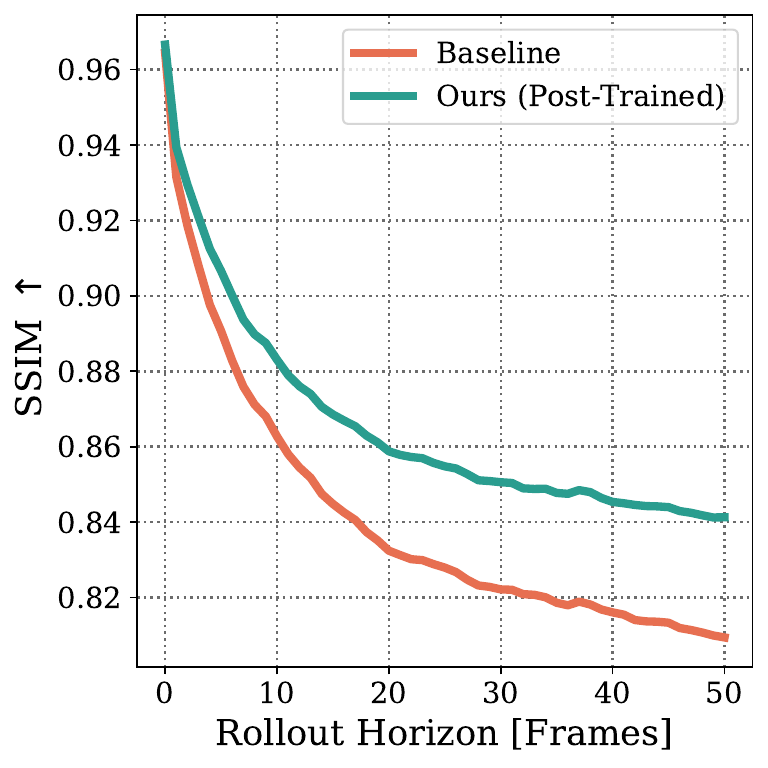}
  \end{subfigure}
  \vspace*{-2mm}
  \caption{
      \textbf{Temporal evolution of external camera metrics.} While both models exhibit natural degradation over longer horizons (x-axis), our post-trained model, PersistWorld (green), consistently maintains higher fidelity and slower error accumulation compared to the baseline (orange). Specifically, our method preserves a higher PSNR and SSIM while suppressing LPIPS drift, effectively extending the stable prediction horizon for complex, fine-grained interactions.
  }
  \label{fig:external-cam-temporal-evolution}
  \vspace*{-4mm}
\end{figure}

\paragraphcustom{Ours vs. WorldCompass~\cite{wang2026worldcompass}}
We share the contrastive-RL paradigm but differ in four ways that preclude direct comparison: $x_0$ vs v-pred parameterization (requiring our Appendix~A re-derivation), action-conditioned manipulation vs camera-pose video-game WM, DROID's larger scale, and our additional validations (human study, policy-eval correlation). To isolate the most transferable component, viz. their reward design, we replaced our reward with HPSv3~\cite{ma2025hpsv3} in our framework on DROID. Table~\ref{tab:ours-worldcompass-hps} shows that our reward structure results in better downstream performance while being much faster to train with. HPSv3 runs a large VLM to check for alignment between a text prompt and generated images. In our case we do not have such text conditioning so we instead run HPSv3 reward with the prompt: ``Robot arm <task>''. Therefore, as such it cannot assess action-conditioned trajectory consistency which is critical for our use case. Our reference-based perceptual rewards directly ground generated frames against the GT trajectory implied by the action sequence, providing a more appropriate training signal for our setting.

\begin{table}[h]
\centering
\caption{\textbf{Our reward vs HPSv3~\protect\cite{ma2025hpsv3}:} To compare to WorldCompass~\protect\cite{wang2026worldcompass}, we compare their reward structure with HPSv3~\protect\cite{ma2025hpsv3} to our light-weight rewards.}
\vspace{-6pt}
\label{tab:ours-worldcompass-hps}
\begin{tabular}{lcccc}
\toprule
Reward & SSIM $\uparrow$ & PSNR $\uparrow$ & LPIPS $\downarrow$ & training sec/iter $\downarrow$ \\
\midrule
HPSv3 (theirs) & 0.802 & 22.58 & 0.193 & 129.2 \\
\textbf{Ours} (top-$k{=}2$) & \textbf{0.826} & \textbf{24.09} & \textbf{0.181} & \textbf{41.3}  \\
\bottomrule
\end{tabular}
\vspace{-2pt}
\end{table}

\paragraphcustom{Generalization beyond DROID}
We finetuned the SVD backbone on Bridge (WidowX)~\cite{ebert2021bridge} and RT-1 (Google Robot)~\cite{rt1} for 100k steps, then applied PersistWorld post-training. In table~\ref{tab:extra-results-gen-droid}, we compare against IRASim~\cite{zhu2025irasim} ($^\dagger$IRASim Tab.~2) under matched data and resolution. PersistWorld improves all metrics on both embodiments, with gains larger than on DROID.

\begin{figure}[tb]
  \centering
  \includegraphics[width=0.48\textwidth]{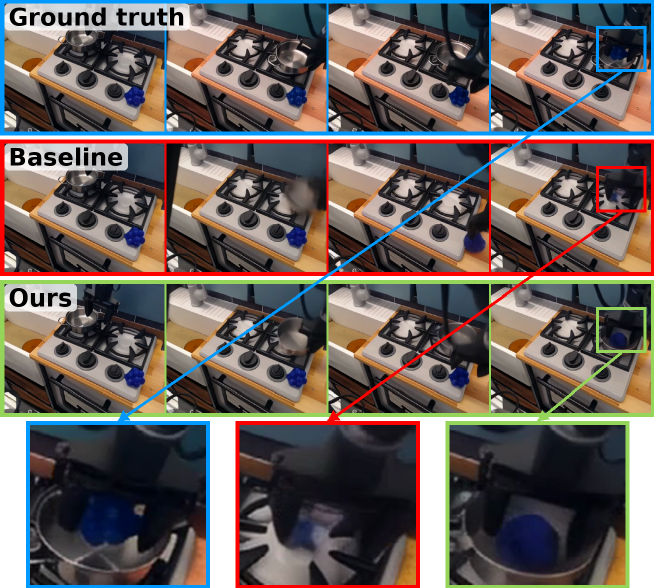}
  \includegraphics[width=0.48\textwidth]{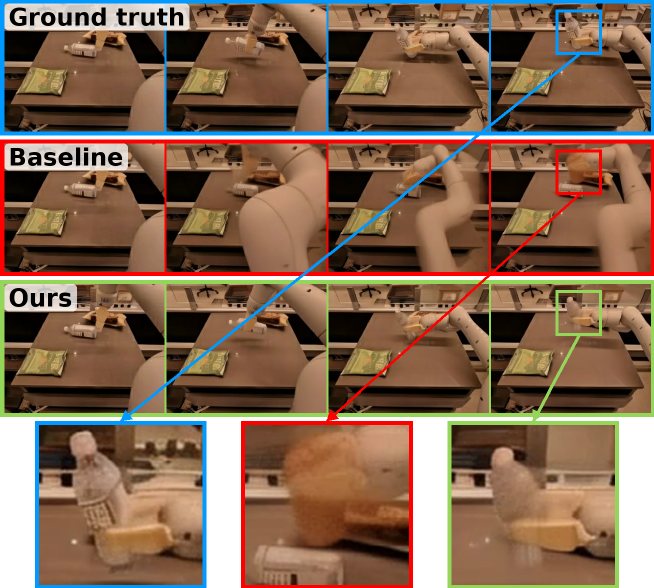}
  \caption{
  \textbf{Qualitative comparison of autoregressive rollout stability on Bridge~\cite{ebert2021bridge} and RT-1~\cite{rt1}.} We compare long-horizon (11s) generations from the \textcolor{myRed}{baseline} against our \textcolor{myGreen}{PersistWorld}. In both the cases, we see that the \textcolor{myRed}{baseline} model struggles to maintain object consistency. \textbf{In left,} we see that the saucepan has completely disappeared in the baseline model, while PersistWorld is able to maintain it. \textbf{In right,} we see that the baseline model has lost the bottle, and instead hallucinates a different object within the robot's gripper. In contrast, PersistWorld is able to maintain the bottle and correctly shows the gripper grasping the bottle.
}
    \vspace{-5mm}
  \label{fig:other-embodiment-qres}
\end{figure}

\begin{table}[h]
\centering
\caption{\textbf{Generalization Beyond DROID:} We compare the performance of the standard action conditioned pretraining on SVD backbone to PersistWorld on a long rollout with $10$ consecutive interactions. We also report IRASim~\protect\cite{zhu2025irasim} on their long trajectory setup ($\dagger$ indicates the number is taken from original paper).}
\vspace{-6pt}
\label{tab:extra-results-gen-droid}
\begin{tabular}{llccc}
\toprule
Dataset & Method & SSIM $\uparrow$ & PSNR $\uparrow$ & LPIPS $\downarrow$ \\
\midrule
\multirow{3}{*}{Bridge} & IRASim~\cite{zhu2025irasim} (long-traj)$^\dagger$ & -- & 23.26 & -- \\
 & SVD backbone & 0.864 & 25.20 & 0.1061 \\
 & + PersistWorld (\textbf{Ours}) & \textbf{0.876} & \textbf{26.36} & \textbf{0.0957} \\
\midrule
\multirow{3}{*}{RT-1} & IRASim~\cite{zhu2025irasim} (long-traj)$^\dagger$ & -- & 24.62 & -- \\
 & SVD backbone & 0.833 & 23.81 & 0.1683 \\
 & + PersistWorld (\textbf{Ours}) & \textbf{0.865} & \textbf{26.63} & \textbf{0.1305} \\
\bottomrule
\end{tabular}
\end{table}

In Figure~\ref{fig:other-embodiment-qres}, we visualize the improvements by PersistWorld on the Bridge and RT-1 datasets. We find that the post-trained model is able to generate more realistic and consistent rollouts compared to the baseline SVD backbone.

\paragraphcustom{Comparison to Self-Forcing style baseline.}
Recent works such as Self-Forcing~\cite{self-forcing, rolling-forcing} are not directly applicable to our case since they primarily distill from a (usually larger) bidirectional teacher, which is not available in our case. However, to measure the importance of our RL objective over the training on corrupted history, we post-trained the model on its own rollout history but ask it to regress the ground truth latents. On the validation set with rollout length of 10 interactions, the visual metrics favor ours: wrist \textbf{0.721}/\textbf{20.40}/0.338 \vs SF 0.710/20.17/0.338; external \textbf{Ours }\textbf{0.872}/\textbf{25.52}/\textbf{0.116} \vs 0.868/25.35/0.119. The advantage is very clear when we look at the effective reward obtained on the rollouts. Figure.~\ref{fig:ours-sf-reward} shows that our post-training strategy obtains an effective reward at step 240 that exceeds the baseline's reward at 2400 steps.

\begin{figure}
    \centering
    \includegraphics[width=0.5\linewidth]{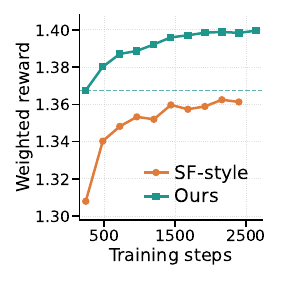}
    \caption{\textbf{Ours vs SF style baseline:} Our reward exceeds the SF-style baseline. Our procedure achieves the same performance $\sim10\times$ faster and achieves a much higher reward at the same number of steps.}
    \label{fig:ours-sf-reward}
    \vspace{-6mm}
\end{figure}

\begin{figure}
    \centering
    \includegraphics[width=0.9\linewidth]{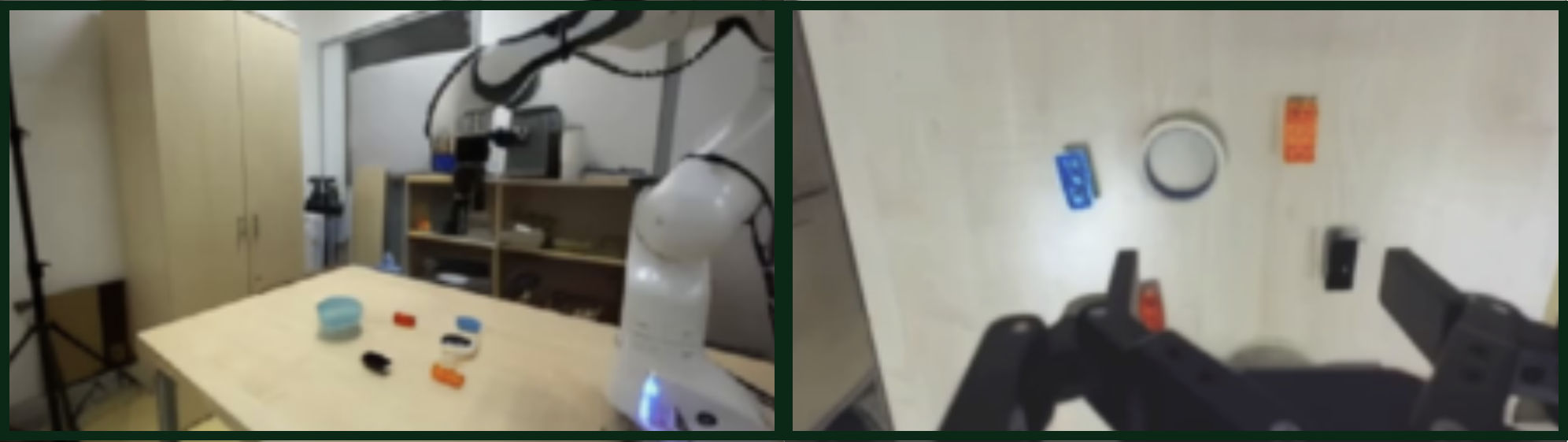}
    \caption{\textbf{Policy Improvement Setup:} We place 4 blocks on a table and command the robot to pick up the top-right/top-left/bottom-right/bottom-left block in the $2\times2$ grid. We also place some distractors on the table such as the bowl and the tape.}
    \label{fig:policy-improvement-setup}
    \vspace{-6mm}
\end{figure}

\paragraphcustom{Policy Improvement.}
We ran a Ctrl-World-style policy improvement experiment: $\pi_0$~\cite{black2024pi_0} is full-finetuned on 50 rollouts for 4 block spatial picking task from each WM, then evaluated on a real-robot (5 trials/block). Prompt: ``Pick up the top-left/top-right/bottom-left/bottom-right block''.
Average task progression: base 0.350, +Ctrl-World 0.558, +\textbf{Ours 0.667} (\textbf{$\mathbf{+19\%}$ over Ctrl-World}). Figure.~\ref{fig:policy-improvement-setup} shows the setup for the policy improvement experiment.
\vspace{-1mm}

\subsection{Post-Training Algorithm}
We present the Post-Training algorithm in Alg.~\ref{alg:rl-posttraining}. We describe the four steps $S_1$ to $S_4$. 


\begin{algorithm}[t]
\caption{RL Post-Training for Autoregressive Video World Models}
\label{alg:rl-posttraining}
\begin{algorithmic}[1]
\Require Pretrained world model $\theta$; frozen reference policy $\hat{\mathbf{x}}_0^{\mathrm{old}}$ (EMA copy of $\theta$); dataset $\mathcal{D}$; group size $K{=}16$; branch length $F$; mixing coefficient $\beta$; reward weights $w_{\mathrm{LPIPS}},\,w_{\mathrm{SSIM}},\,w_{\mathrm{PSNR}}$
\For{each training step}
  \State \textbf{\textit{-- Stage $\text{S}_1$: Generate shared prefix --}}
  \State Sample initial ground-truth observation $(\mathbf{x}_0, \mathbf{e}_0)$ from $\mathcal{D}$
  \State Sample prefix length $P \sim \mathrm{Unif}\{0, 1, \ldots, 9\}$
  \State Initialise history buffer $\mathbf{h}_0$ by replicating the encoded latent of $\mathbf{x}_0$
  \For{$p = 1$ \textbf{to} $P$} \Comment{closed-loop autoregressive rollout}
    \State Sample $\mathbf{x}_p \sim \theta(\cdot\,|\,\mathbf{h}_{p-1},\, \mathbf{e}_{p-1:p+L})$
    \State $\mathbf{h}_p \leftarrow$ append $\mathrm{Enc}(\mathbf{x}_p)$ to $\mathbf{h}_{p-1}$ \Comment{history buffer update}
  \EndFor
  \Statex
  \State \textbf{\textit{-- Stage $\text{S}_2$: Branch $K$ candidate continuations --}}
  \For{$k = 1$ \textbf{to} $K$} \Comment{independent samples from shared context}
    \State Initialise private buffer $\mathbf{h}^{(k)} \leftarrow \mathbf{h}_P$ \Comment{frozen copy of prefix}
    \For{$f = 1$ \textbf{to} $F$}
      \State Sample $\mathbf{x}^{(k)}_{P+f} \sim \theta\bigl(\cdot\,|\,\mathbf{h}^{(k)},\, \mathbf{e}_{P+f:P+f+L}\bigr)$
      \State $\mathbf{h}^{(k)} \leftarrow$ append $\mathrm{Enc}\bigl(\mathbf{x}^{(k)}_{P+f}\bigr)$ to $\mathbf{h}^{(k)}$
    \EndFor
  \EndFor
  \Statex
  \State \textbf{\textit{-- Stage $\text{S}_3$: Score, rank, and group-normalise --}}
  \For{$k = 1$ \textbf{to} $K$}
    \State Compute per-view, per-frame LPIPS, SSIM, PSNR against ground-truth frames
    \State $R^{(k)} \leftarrow -w_{\mathrm{LPIPS}}\,\overline{\mathrm{LPIPS}}^{(k)} + w_{\mathrm{SSIM}}\,\overline{\mathrm{SSIM}}^{(k)} + w_{\mathrm{PSNR}}\,\overline{\mathrm{PSNR}}^{(k)}$
  \EndFor
  \State $\mu_R \leftarrow \frac{1}{K}\sum_k R^{(k)},\quad \sigma_R \leftarrow \mathrm{std}_k(R^{(k)})$
  \For{$k = 1$ \textbf{to} $K$}
    \State $A^{(k)} \leftarrow \bigl(R^{(k)} - \mu_R\bigr)\,/\,(\sigma_R + \epsilon)$ \Comment{z-score normalisation}
    \State $r^{(k)} \leftarrow \bigl(\mathrm{clip}(A^{(k)},\,-1,\,1) + 1\bigr)\,/\,2$ \Comment{rescale to $[0,1]$}
  \EndFor
  \Statex
  \State \textbf{\textit{-- Stage $\text{S}_4$: Contrastive denoising update --}}
  \State $\mathcal{L} \leftarrow 0$
  \For{$k = 1$ \textbf{to} $K$}
    \State Sample $\sigma \sim p(\sigma)$,\; $\boldsymbol{\varepsilon} \sim \mathcal{N}(\mathbf{0}, \mathbf{I})$
    \State $\mathbf{x}^{(k)}_\sigma \leftarrow \mathbf{x}^{(k)}_0 + \sigma\boldsymbol{\varepsilon}$ \Comment{forward noising}
    \State $\hat{\mathbf{x}}^{(k)}_{0,\theta} \leftarrow$ current model prediction from $\mathbf{x}^{(k)}_\sigma$
    \State $\hat{\mathbf{x}}^{(k)}_{0,\mathrm{old}} \leftarrow$ reference model prediction from $\mathbf{x}^{(k)}_\sigma$ \Comment{frozen}
    \State $\hat{\mathbf{x}}^{(k)+}_{0} \leftarrow (1-\beta)\,\hat{\mathbf{x}}^{(k)}_{0,\mathrm{old}} + \beta\,\hat{\mathbf{x}}^{(k)}_{0,\theta}$
    \State $\hat{\mathbf{x}}^{(k)-}_{0} \leftarrow (1+\beta)\,\hat{\mathbf{x}}^{(k)}_{0,\mathrm{old}} - \beta\,\hat{\mathbf{x}}^{(k)}_{0,\theta}$
    \State $\mathcal{L} \mathrel{+}= r^{(k)}\,\|\hat{\mathbf{x}}^{(k)+}_{0} - \mathbf{x}^{(k)}_0\|_2^2 + (1-r^{(k)})\,\|\hat{\mathbf{x}}^{(k)-}_{0} - \mathbf{x}^{(k)}_0\|_2^2$
  \EndFor
  \State Update $\theta$ (LoRA adapters + action encoder only) via gradient descent on $\mathcal{L}$
  \State $\hat{\mathbf{x}}_0^{\mathrm{old}} \leftarrow \mathrm{EMA}(\hat{\mathbf{x}}_0^{\mathrm{old}},\, \theta)$ \Comment{update reference policy}
\EndFor
\end{algorithmic}
\end{algorithm}


%
%
\end{document}